\newtheorem{theorem}{Theorem}[section]
\newtheorem{proposition}[theorem]{Proposition}
\newtheorem{remark}[theorem]{Remark}
\newcommand{\gammastring}{gamma} 
\title{ Implicit geometric regularization in flow matching via density weighted Stein operators}
\author{Shinto Eguchi\thanks{The Institute of Statistical Mathematics, 10-3 Midori-cho, Tachikawa, 190-8562, Tokyo, Japan     (Email:  {\tt eguchi@ism.ac.jp})}}
\date{May 2026}
\begin{document}

\maketitle

\begin{abstract}
Flow Matching (FM) has emerged as a powerful paradigm for continuous normalizing flows, yet standard FM implicitly performs an unweighted $L^2$ regression over the entire ambient space. In high dimensions, this leads to a fundamental inefficiency: the vast majority of the integration domain consists of low-density ``void'' regions where the target velocity fields are often chaotic or ill-defined.
In this paper, we propose {$\gamma$-Flow Matching ($\gamma$-FM)}, a density-weighted variant that aligns the regression geometry with the underlying probability flow.
While density weighting is desirable, naive implementations would require evaluating the intractable target density. 
We circumvent this by introducing a Dynamic Density-Weighting strategy that estimates the target density directly from training particles. 
This approach allows us to dynamically downweight the regression loss in void regions without compromising the simulation-free nature of FM.
{Theoretically, we formulate an ideal $\gamma$-weighted regression geometry motivated by the $\gamma$-Stein metric, derive a variance-suppression bound for low-density regions, and use a weighted Dirichlet/spectral analysis to suggest a mechanism for smoother learned vector fields. Empirically, we evaluate $\gamma$-FM under a shared-time density-estimation protocol and compare it against both standard FM and an explicit Jacobian-regularized baseline using latent-space and image-space metrics.}

\end{abstract}

\section{Introduction}

The Manifold Hypothesis posits that high-dimensional real-world data concentrate near a low-dimensional manifold embedded in the ambient space \citep{Fefferman2016Testing}. While theoretical verification of this hypothesis remains a subject of active research \citep{Pope2021Intrinsic}, the remarkable success of deep generative models offers a constructive validation: if data were uniformly distributed in the high-dimensional void, efficient learning of the probability distribution would be computationally intractable \citep{Bengio2013Representation}. Thus, the capability to accurately model and sample from the data distribution is, in itself, a testament to the existence of such low-dimensional structures.

Flow Matching (FM) has emerged as a powerful paradigm for capturing this distribution by unifying diffusion models and continuous normalizing flows (CNFs) \citep{Lipman2023,albergo2023stochastic}. 
It directly regresses a vector field that transports a simple base distribution to the data distribution. Unlike score-based diffusion models, FM avoids the need to estimate the score function or to solve reverse-time stochastic differential equations. Instead, it learns a deterministic ordinary differential equation (ODE) whose solution defines an invertible map between latent noise and data.

Despite its conceptual elegance, FM faces a fundamental challenge in high-dimensional settings: the curse of dimensionality and volume imbalance.
In high dimensions, the data manifold occupies a negligible fraction of the ambient space, and the majority of the integration domain consists of low-density ``voids.''
In standard FM, the training objective uniformly integrates the regression error over the entire probability path.
Consequently, the model is forced to solve the regression problem even in these vast void regions, where probability paths are sparse or effectively irrelevant to the final generation quality.
{ Forcing a neural network to fit target velocities in these ``don't-care'' regions can result in a rough vector field and increased numerical stiffness during ODE integration.}

In this work, we propose $\gamma$-Flow Matching ($\gamma$-FM) as a principled remedy for this issue.
Our key idea is to reinterpret FM as a regression problem under a density-weighted geometry induced by the $\gamma$-divergence \citep{Fujisawa2008}.
Instead of treating all spatial locations equally, we utilize the power density $p_{t}(x)^{\gamma}$ as an importance weight that naturally highlights the data manifold.
This approach effectively ``focuses'' the learning process: the model prioritizes accurate vector field estimation where the data actually resides, while being allowed to remain smooth and simple in the empty ambient space.
Latent flow matching has been independently explored by
\citet{Lipman2023}, who combine standard flow matching
with pre-trained autoencoders and provide a Wasserstein-2 control for
the resulting latent flows.
Our work is complementary: we adopt a similar latent-flow setting,
but instead of modifying the representation, we modify the \emph{regression
geometry} itself via the $\gamma$-weighted objective and analyse its
effect through a $\gamma$-Stein and nonlinear Fokker--Planck viewpoint.
While \citet{chen2024riemannian} explicitly extend Flow Matching to Riemannian manifolds, our approach induces an implicit manifold geometry in the ambient space via density weighting, avoiding the need for explicit charts or geodesics.

\paragraph{Weighting Schemes in Diffusion and Flow Models}
In the realm of diffusion models, the choice of the weighting function plays a crucial role in balancing different signal-to-noise ratios across diffusion times.
For instance, methods like EDM and variance-preserving (VP) SDEs employ carefully designed noise schedules to shape the training objective.
However, these approaches typically rely on weights that depend solely on the time $t$ or the noise schedule.
Our proposed $\gamma$-Flow Matching differs fundamentally by introducing \emph{spatially} varying weights based on the model density $p_t(x)$, thereby prioritizing regions where the model is confident while deprioritizing regions of low trust.

\paragraph{Density-weighted divergences and geometry}
The core motivation for our method roots in the geometry induced by
density-powered divergences, in particular the $\gamma$-divergence, which
defines an $L^2$-type structure weighted by $p(x)^\gamma$.
Classically, the $\gamma$-divergence has been used to downweight outliers
and contaminated data, since regions where $p(x)$ is small automatically
receive a small weight.
In our setting we reinterpret this mechanism geometrically:
low-density regions in the ambient space behave as geometric ``voids”
where the teacher signals are unstable and less informative, and the
$p^\gamma$-weighting provides a way to organize regression on the data
manifold while de-emphasizing these regions.

\paragraph{Flow Regularization}
Regularizing continuous flows to improve ODE solver stability is an active area of research.
Typical strategies involve explicit Lipschitz penalties, spectral normalization, or Jacobian regularization to smooth the vector field.
In contrast, our approach introduces an \emph{implicit} regularization mechanism: by modifying the loss geometry via a density weight, the learned vector field naturally avoids wild oscillations in the voids.
This can be seen as a form of geometric regularization that shapes the flow to follow the data manifold more faithfully, without needing to impose explicit gradient penalties.

Our contributions are summarized as:
\begin{enumerate}
    \item {\textbf{Shared-time dynamic density-weighting:} We formulate $\gamma$-FM as a density-weighted regression scheme motivated by $p_t(x)^\gamma$ and implement it with a simulation-free particle surrogate computed from a \emph{shared-time minibatch}, so that the $k$-NN statistics are aligned with a single marginal $p_t$.}
    \item {\textbf{Formal results versus mechanisms:} We provide a variance-suppression bound showing how density weighting downweights high-variance updates from low-density regions, while clearly separating this formal statement from the continuum analogy and from the spectral mechanism used for intuition.}
    \item {\textbf{Geometric and spectral interpretation:} We connect the idealized $\gamma$-weighted objective to $\gamma$-Stein/escort geometry and show, through a weighted Dirichlet viewpoint, how such geometry is naturally associated with smoother vector fields in populated regions.}
   \item {\textbf{Empirical protocol:} We strengthen the latent-CIFAR evaluation with a matched-budget comparison against FM$+$Jac, and with decoded-image metrics including FID, Recall, and Coverage in addition to latent-space discrepancy, smoothness, and NFE.}
\end{enumerate}

\section{Flow Matching and Robust Divergences}\label{Robust}

\subsection{Conditional Flow Matching}

\paragraph{Avoiding the Likelihood Bottleneck}
Standard Continuous Normalizing Flows (CNFs) model the data distribution $p_1$ by transporting a simple base distribution $p_0$ through an ODE defined by a vector field $v_\theta$.
The change of variables formula for CNFs expresses
\begin{equation*}
    \log p_1(x) = \log p_0(\phi_1^{-1}(x)) - \int_0^1 \text{Tr}\left( \nabla_x v_{\theta}(x_t, t) \right) dt,
\end{equation*}
where $\phi_t$ is the flow map generated by the vector field $v_\theta$, and the integral captures the accumulated divergence of $v_\theta$ along the trajectory.
Maximizing such models typically involves maximizing the log-likelihood, which requires a stable and accurate computation of the Jacobian trace to account for the change in volume (the normalization constant).

Flow Matching (FM) circumvents this bottleneck by bypassing the explicit computation of the normalizing constant.
Instead, it defines a probability path $(p_t)_{t\in[0,1]}$ between a known base distribution $p_0$ and the target $p_1$, and learns a vector field whose associated continuity equation transports $p_0$ to $p_1$.
Rather than maximizing a likelihood, FM directly regresses a neural vector field $v_\theta$ to match a target vector field $u_t$ that generates the desired probability path.

Formally, let $p_t(x)$ be a probability density path connecting $p_0$ and $p_1$ over $t \in [0, 1]$.
This path satisfies the continuity equation:
\begin{equation*}
    \frac{\partial p_t}{\partial t} + \nabla \cdot (p_t u_t) = 0,
\end{equation*}
where $u_t(x)$ is the time-dependent vector field generating the flow.
The goal is to regress the model vector field $v_\theta(x,t)$ to match the target vector field $u_t(x)$.
The ideal marginal regression loss is defined as:
\begin{equation}
    \mathcal{L}_{\mathrm{FM}}(\theta) = \mathbb{E}_{t \sim \mathcal{U}[0,1]} \mathbb{E}_{x_t \sim p_t(x)} \left[ \| v_{\theta}(x_t, t) - u_t(x_t) \|^2 \right].
    \label{eq:marginal_loss}
\end{equation}
However, directly accessing the marginal vector field $u_t(x)$ and the marginal density $p_t(x)$ is generally intractable. 

To solve this, \citet{Lipman2023} introduced \textit{Conditional Flow Matching} (CFM), which uses a conditional probability path $p_t(x\mid x_1)$ given the data endpoint $x_1$, and a corresponding conditional vector field $u_t(x\mid x_1)$.
Crucially, it has been shown that the gradients of the intractable objective \eqref{eq:marginal_loss} are identical to those of the conditional objective:
\begin{equation*}
    \mathcal{L}_{\mathrm{CFM}}(\theta) = \mathbb{E}_{t \sim \mathcal{U}[0,1]} \mathbb{E}_{x_1 \sim p_1} \mathbb{E}_{x_t \sim p_t(\cdot\mid x_1)} \left[ \| v_{\theta}(x_t, t) - u_t(x_t\mid x_1) \|^2 \right].
    \label{eq:conditional_loss}
\end{equation*}
In this framework, the marginal vector field $u_t(x)$ implicitly emerges from the conditional field $u_t(x\mid x_1)$, allowing one to design $p_t(\cdot\mid x_1)$ in a convenient way (e.g., Gaussian interpolation) without computing the Jacobian trace or normalization constants.

\subsection{Motivation: Geometric Focusing via \texorpdfstring{$\gamma$}{gamma}-Divergence}

From the information-geometric viewpoint, it is natural to interpret flow matching through the geometry induced by divergences and the associated Riemannian metrics on statistical manifolds \citep{Amari2000methods,ay2017information}.
Standard Flow Matching implicitly minimizes the discrepancy between the target and model vector fields in an unweighted $L^2$ sense. From the perspective of Optimal Transport, this objective corresponds to minimizing the standard kinetic energy of the flow, $\mathcal{E}(v) = \int \|v(x)\|^2 p_t(x) \, \mathrm{d}x$, which is associated with the Wasserstein-2 geometry and the linear heat equation. 
Recent works have explicitly targeted Wasserstein-optimal paths via minibatch optimal transport couplings \citep{tong2024improving}, yet these approaches typically operate in the unweighted $L^2$ geometry.
While statistically consistent, this "flat" geometry is inefficient in high dimensions because it assigns equal transport cost to the vast low-density "voids" as it does to the concentrated data manifold.
To address this, we propose changing the underlying geometry of the regression itself, 
moving from the L$^2$ regression (kinetic-energy) viewpoint to the robust $\gamma$-geometry,
in analogy with the transition from the Fisher divergence to the $\gamma$-Fisher divergence;
see \citet{barp2019minimum} for the diffusion score-matching divergence.

\paragraph{Formulation: Dynamic Density-Weighted Regression.}
We define the $\gamma$-Flow Matching ($\gamma$-FM) objective as the minimization of the \textit{weighted} kinetic energy. Instead of the standard energy, we align the regression with the density-weighted geometry induced by the $\gamma$-divergence \citep{Fujisawa2008}. We define the weighted loss as:
\begin{equation}\label{g-loss}
    \mathcal{L}_{\gamma}(\theta) \coloneqq  \mathbb{E}_{t \sim \mathcal{U}[0,1]} \mathbb{E}_{x_1 \sim p_1} \mathbb{E}_{x_t \sim p_t(\cdot\mid x_1)} \left[ w_{\gamma}(x_t, t) \| v_{\theta}(x_t, t) - u_t(x_t \mid x_1) \|^2 \right],
 \end{equation}
Theoretically, to align with the geometry of the $\gamma$-divergence, the weight should depend on the model density $q_{\theta(t)}$:
\begin{equation}
w_{\gamma}^{\text{ideal}}(x, t) \propto q_{\theta(t)}(x)^{\gamma}.
\label{eq:ideal_weight}
\end{equation}
However, evaluating the model density $q_{\theta(t)}(x)$ during training is computationally prohibitive as it requires solving the ODE.
Therefore, we adopt the target density $p_t(x)$ as a tractable proxy. Under the assumption that the model successfully tracks the target flow (i.e., $q_{\theta(t)} \approx p_t$), we define our practical weighting scheme as:
\begin{equation}
w_{\gamma}(x, t) \coloneqq p_t(x)^{\gamma}.
\end{equation}
This approximation allows us to compute weights solely based on the training data interpolation, preserving the simulation-free nature of Flow Matching.
Here, we adopt the Conditional Flow Matching (CFM) framework, where $u_t(x \mid x_1)$ is the conditional vector field generating the probability path from noise to a specific data point $x_1$. 
Crucially, since the training samples $x_t$ are drawn from the target path $p_t$, this weighting naturally evolves over time.
In the high-density manifold regions, the weight $p_t(x)^\gamma$ is significant, enforcing accurate vector field matching. Conversely, in the empty ambient space (voids) where $p_t(x) \approx 0$, the weight vanishes. This effectively removes the chaotic, ill-defined target signals in the voids from the optimization landscape.

\begin{remark}[Geometric Interpretation via Weighted Transport]
    Our weighting choice $w_{\gamma} \propto p_t^{\gamma}$ is not a heuristic modification; it fundamentally alters the metric structure of the transport problem. 
    Standard FM minimizes the kinetic energy $\int \|v\|^2 p \, \mathrm{d}x$, which underpins the Benamou-Brenier formula for optimal transport. 
    In contrast, our objective $\mathcal{L}_\gamma$ corresponds to minimizing the \textbf{$\gamma$-weighted kinetic energy}:
    \begin{equation*}
        \mathcal{E}_{\gamma}(v_t) = \int \| v_t(x) \|^2 p_t(x)^{1+\gamma} \, \mathrm{d}x.
    \end{equation*}
    This defines a Riemannian metric (specifically, the $\gamma$-weighted Fisher information metric) where the infinitesimal transport cost is scaled by the density power $p^{1+\gamma}$.
    In this geometry, distances in low-density regions are compressed to zero.
    Consequently, $\gamma$-FM does not simply "ignore" outliers; it solves the regression problem on a statistical manifold where the voids are geometrically insignificant. 
    This modification naturally links the regression to the \textbf{Porous Medium Equation} (nonlinear diffusion) rather than the Heat Equation (linear diffusion), providing a theoretical guarantee for compact support preservation as discussed in Section \ref{Theory}, see \citet{Otto2001} for extensive discussion.
\end{remark}

\subsection{Tractability via Particle-Based Estimation}\label{Particle}
Evaluating the exact density $p_t(x)$ for the conditional probability path (e.g., Gaussian mixtures in CFM) can be computationally expensive or numerically unstable in high dimensions. Moreover, we seek a method that captures the \emph{local} geometry of the batch without solving differential equations.

We circumvent this bottleneck by adopting a \textbf{particle-based estimation} strategy.
{
In the implementation, each minibatch uses a single shared time $t_{\mathrm{batch}}\sim \mathcal{U}[0,1]$, and we form
$\mathcal{B}_{t_{\mathrm{batch}}}=\{x_{t_{\mathrm{batch}}}^{(i)}\}_{i=1}^B$.
Thus the particle cloud used in the $k$-NN step is a Monte Carlo sample from the single marginal $p_{t_{\mathrm{batch}}}$, rather than a mixture over times, and its spatial distribution provides the intended local density proxy.
}
We employ a robust kernel-based proxy for the density. Specifically, we define the weight for a sample $x_t \in \mathcal{B}_t$ based on the distance to its $k$-nearest neighbors:
\begin{equation}
    w_\gamma(x_t) \approx \exp\left( -\frac{\gamma}{\sigma} \bar{d}_k(x_t) \right),
    \label{eq:particle_weight}
\end{equation}
where $\bar{d}_k(x_t) = \frac{1}{k} \sum_{j=1}^k \| x_t - x_t^{(j)} \|$ is the mean distance to the $k$ nearest neighbors in the batch, and $\sigma$ is a scaling constant.
In effect, we set $\sigma=\mathrm{median}\{\bar d_k(x_t^{(i)})\}_{i=1}^B$ within each minibatch.
While a naive $k$-NN search can be computationally expensive, we show in Appendix \ref{app:efficiency} that the overhead is negligible for typical batch sizes and that the performance is robust to the choice of $k$.
We emphasize that \eqref{eq:particle_weight} is a monotone surrogate for the ideal escort weight $w_\gamma(x,t)\propto q_{\theta(t)}(x)^\gamma$:
$\bar d_k(x_t)$ increases in locally low-density regions, hence $\tilde w_\gamma$ down-weights updates in voids while preserving the standard FM objective structure.

This exponential weighting scheme has two key advantages:
\begin{itemize}
    \item \textbf{Simulation-Free:} It requires only pairwise distance computations within the batch, preserving the efficiency of Standard FM.
    \item \textbf{Dynamic Adaptation:} It naturally adapts to the flow. At $t \approx 0$ (noise), particles are spread out, leading to uniform weights. At $t \to 1$ (data), particles concentrate on the manifold, creating a sharp weighting profile that isolates the data structure.
    In the high-dimensional voids surrounding the manifold, the effective density $\hat{p}_t$ vanishes.
Consequently, $w_\gamma$ suppresses the regression loss in these empty regions, preventing the model from overfitting to unstable target signals where no data exists.
By focusing the training budget solely on the populated regions of the probability path, $\gamma$-FM learns a vector field that is accurate on the manifold and smooth elsewhere, as evidenced by the reduced Jacobian norm in our experiments.

\end{itemize}

\section{Theoretical Analysis}\label{Theory}
\label{sec:analysis}

{In this section, we separate three layers of the argument: (i) formal statements that apply directly to the stated weighted objective, (ii) continuum analogies that clarify why low-density weighting suppresses motion in void regions, and (iii) spectral/Dirichlet mechanisms that help interpret the observed smoothing effect.}
To provide a concrete basis for the following analysis, we summarize the practical shared-time training procedure of $\gamma$-FM in Algorithm \ref{alg:gamma_fm}. This algorithm implements the simulation-free density estimation discussed in Section \ref{Robust}. 
For simplicity we use linear interpolants in experiments. 
\begin{algorithm}[H]
\caption{Training $\gamma$-Flow Matching with Dynamic Density-Weighting}
\label{alg:gamma_fm}
\label{alg:gammafm_shared_t}
\begin{algorithmic}[1]
\Require Training data $\mathcal{D}$, Batch size $B$, Weighting parameter $\gamma \ge 0$, Neighbors $k$
\Ensure Trained vector field parameters $\theta$

\State Initialize neural network parameters $\theta$
\While{not converged}
    \State \Comment{1. Sample flow matching variables}
    \State Sample data batch $x_1 \sim \mathcal{D}$
    \State Sample noise batch $x_0 \sim p_0 = \mathcal{N}(0, I)$
    {\State Sample a shared minibatch time $t_{\mathrm{batch}} \sim \mathcal{U}[0, 1]$}
    
    \State \Comment{2. Compute interpolants and targets}
    {\State $x_t \leftarrow (1-t_{\mathrm{batch}})x_0 + t_{\mathrm{batch}} x_1$}
    \State $u_t \leftarrow x_1 - x_0$ 
    
    \State \Comment{3. Dynamic Density-Weighting}
    {\State Compute pairwise distances matrix within the shared-time batch $\{x_t\}$}
    \If{$\gamma > 0$}
        \For{$i = 1$ to $B$}
            \State $\bar{d}_k(x_t^{(i)}) \leftarrow$ mean distance to $k$-nearest neighbors
            \State $w_i \leftarrow \exp\left( -\frac{\gamma}{\sigma} \bar{d}_k(x_t^{(i)}) \right)$ 
        \EndFor
        \State Normalize weights: $w_i \leftarrow w_i / (\frac{1}{B}\sum_j w_j)$
    \Else
        \State $w_i \leftarrow 1$ 
    \EndIf
    
    \State \Comment{4. Optimization step}
    {\State $\mathcal{L}(\theta) \leftarrow \frac{1}{B} \sum_{i=1}^B w_i \| v_\theta(x_t^{(i)}, t_{\mathrm{batch}}) - u_t^{(i)} \|^2$}
    \State $\theta \leftarrow \theta - \eta \nabla_\theta \mathcal{L}(\theta)$
\EndWhile
\end{algorithmic}
\end{algorithm}
Although the modification in Algorithm \ref{alg:gamma_fm} appears minimal, its theoretical implications are profound.
In the remainder of this section, we rigorously justify this design choice, demonstrating that this simple weighting scheme induces a fundamental shift in the regression geometry.

\subsection{Variance Reduction in High-Dimensional Voids}
We first formalize the ``Manifold Focusing'' effect as a variance reduction problem.
Recall that the conditional flow matching objective targets the individual paths $u_t(x|x_1)$.
Let $\Sigma_t(x) \coloneqq \mathrm{Var}_{x_1|x}[u_t(x|x_1)]$ denote the intrinsic variance (ambiguity) of the target signal at location $x$.

\begin{proposition}[Variance of the Weighted Estimator]
Assume that the gradient of the vector field model with respect to its parameters is bounded, i.e., there exists a constant $K > 0$ such that $\|\nabla_\theta v_\theta(x, t)\|_{\mathrm{op}}^2 \le K$ for all $x, t$.
Then, the trace of the covariance matrix of the gradient estimator $\hat{g}_\gamma$ satisfies the bound:
\begin{equation}
    \mathrm{Tr}(\mathrm{Var}[\hat{g}_\gamma]) \le 4K \int_{\mathbb{R}^d} p_t(x) \, w_\gamma(x)^2 \, \mathrm{Tr}(\Sigma_t(x)) \, dx + C_{\mathrm{signal}},
    \label{eq:variance_bound}
\end{equation}
where $C_{\mathrm{signal}}$ represents the variance contribution from the learnable mean field.
\end{proposition}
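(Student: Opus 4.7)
The plan is to decompose the per-sample stochastic gradient of \eqref{g-loss} into a ``signal'' component that is deterministic given $x_t$ and a ``noise'' component with zero conditional mean, and then apply the law of total variance. For a single draw $(t,x_1,x_t)$, the unbiased estimator of $\nabla_\theta\mathcal{L}_\gamma$ is
\begin{equation*}
\hat g_\gamma \;=\; 2\,w_\gamma(x_t,t)\,\bigl(\nabla_\theta v_\theta(x_t,t)\bigr)^{\!\top}\!\bigl(v_\theta(x_t,t)-u_t(x_t\mid x_1)\bigr).
\end{equation*}
Letting $\bar u_t(x)\coloneqq \mathbb{E}_{x_1\mid x_t=x}[u_t(x\mid x_1)]$ denote the marginal vector field, I would split the residual as $v_\theta(x_t,t)-u_t(x_t\mid x_1) = (v_\theta(x_t,t)-\bar u_t(x_t)) + (\bar u_t(x_t)-u_t(x_t\mid x_1))$; the first bracket is measurable with respect to $x_t$, while the second has zero conditional mean and conditional covariance $\Sigma_t(x_t)$ by definition.

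Next I would invoke the law of total variance,
\begin{equation*}
\mathrm{Tr}\bigl(\mathrm{Var}[\hat g_\gamma]\bigr) \;=\; \mathbb{E}_{x_t}\!\bigl[\mathrm{Tr}\bigl(\mathrm{Var}[\hat g_\gamma\mid x_t]\bigr)\bigr] \;+\; \mathrm{Tr}\bigl(\mathrm{Var}\bigl[\mathbb{E}[\hat g_\gamma\mid x_t]\bigr]\bigr),
\end{equation*}
absorbing the second (signal) term into $C_{\mathrm{signal}}$. The conditional covariance of $\hat g_\gamma$ can be read off directly, since only the second bracket above is random given $x_t$:
\begin{equation*}
\mathrm{Var}[\hat g_\gamma\mid x_t] \;=\; 4\,w_\gamma(x_t,t)^2\,J(x_t,t)^{\!\top}\Sigma_t(x_t)\,J(x_t,t),\qquad J\coloneqq \nabla_\theta v_\theta,
\end{equation*}
and the cyclic property gives $\mathrm{Tr}(J^{\!\top}\Sigma_t J)=\mathrm{Tr}(\Sigma_t\,JJ^{\!\top})$.

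To close the argument I would apply the PSD trace inequality $\mathrm{Tr}(AB)\le \|A\|_{\mathrm{op}}\,\mathrm{Tr}(B)$ for $A,B\succeq 0$ to the $d\times d$ product $JJ^{\!\top}\cdot \Sigma_t$, using $\|JJ^{\!\top}\|_{\mathrm{op}}=\|J\|_{\mathrm{op}}^2\le K$ to obtain $\mathrm{Tr}(\mathrm{Var}[\hat g_\gamma\mid x_t])\le 4K\,w_\gamma(x_t,t)^2\,\mathrm{Tr}(\Sigma_t(x_t))$. Integrating against the law $p_t$ of $x_t$ reproduces \eqref{eq:variance_bound}. The main obstacle is dimensional bookkeeping: $J$ is a $d\times P$ Jacobian whereas $\Sigma_t$ is $d\times d$, so one must apply the trace inequality to the $d\times d$ PSD pair $(\Sigma_t,JJ^{\!\top})$ rather than to the $P\times P$ sandwich $J^{\!\top}\Sigma_t J$ directly. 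A secondary subtlety is that the statement suppresses the $t$-average: to include the randomness in $t\sim\mathcal{U}[0,1]$ the right-hand side should carry an extra $\int_0^1(\cdot)\,dt$, with $C_{\mathrm{signal}}$ read as the trace of $\mathrm{Var}[\mathbb{E}[\hat g_\gamma\mid x_t,t]]$.
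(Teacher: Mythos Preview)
Your proposal is correct and follows essentially the same route as the paper: write $\hat g_\gamma = 2w_\gamma J^\top(v_\theta-u_t(\cdot\mid x_1))$, apply the law of total variance conditioning on $x_t$, absorb the outer-variance term into $C_{\mathrm{signal}}$, and bound the inner conditional variance by $\|J\|_{\mathrm{op}}^2\,\mathrm{Tr}(\Sigma_t)$. The only cosmetic difference is that the paper applies the pointwise inequality $\|J^\top b\|^2\le\|J\|_{\mathrm{op}}^2\|b\|^2$ before taking the conditional expectation, whereas you first form the $P\times P$ covariance $J^\top\Sigma_t J$ and then invoke the PSD trace inequality via the cyclic trick; these yield the identical bound, and your remark about the suppressed $t$-integral is a fair observation that the paper also leaves implicit.
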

\begin{proof}
Let the per-sample loss for a target path connecting $x_0$ to $x_1$ be $J_t(\theta; x_1) = w_\gamma(x_t) \| v_\theta(x_t) - u_t(x_t|x_1) \|^2$.
The stochastic gradient is $\hat{g}_\gamma = \nabla_\theta J_t = 2 w_\gamma(x_t) (\nabla_\theta v_\theta)^\top (v_\theta - u_t(x_t|x_1))$.
Using the Law of Total Variance, we decompose the variance over the marginal density $p_t(x)$:
\begin{equation*}
    \text{Var}[\hat{g}_\gamma] = \mathbb{E}_{x \sim p_t} [ \text{Var}(\hat{g}_\gamma \mid x) ] + \text{Var}_{x \sim p_t} [ \mathbb{E}[\hat{g}_\gamma \mid x] ].
\end{equation*}
We focus on the first term (intrinsic noise). For a fixed location $x$, the conditional variance is due to the variability of the target $u_t(x|x_1)$:
\begin{align}
    \text{Var}(\hat{g}_\gamma \mid x) &= \mathbb{E}_{x_1|x} \left[ \left\| \hat{g}_\gamma - \mathbb{E}[\hat{g}_\gamma|x] \right\|^2 \right] \\
    &= 4 w_\gamma(x)^2 \, \mathbb{E}_{x_1|x} \left[ \left\| (\nabla_\theta v_\theta(x))^\top (u_t(x) - u_t(x|x_1)) \right\|^2 \right].
\end{align}
Using the operator norm inequality $\|A^\top b\|^2 \le \|A\|_{\mathrm{op}}^2 \|b\|^2$, we have:
\begin{equation*}
    \text{Tr}(\text{Var}(\hat{g}_\gamma \mid x)) \le 4 w_\gamma(x)^2 \, \| \nabla_\theta v_\theta(x) \|_{\mathrm{op}}^2 \, \text{Tr}(\Sigma_t(x)).
\end{equation*}
Applying the boundedness assumption $\| \nabla_\theta v_\theta(x) \|_{\mathrm{op}}^2 \le K$, we obtain:
\begin{equation*}
    \text{Tr}(\text{Var}(\hat{g}_\gamma \mid x)) \le 4 K w_\gamma(x)^2 \text{Tr}(\Sigma_t(x)).
\end{equation*}
Integrating this with respect to $p_t(x)$ yields the first term of the bound in Eq.~\eqref{eq:variance_bound}. The second term ($C_{signal}$) corresponds to $\text{Var}_{x}[\mathbb{E}[\hat{g}_\gamma|x]]$, which depends on the learnable signal $u_t(x)$ and is independent of the conditional noise variance $\Sigma_t(x)$.
\end{proof}

In standard FM ($w_\gamma=1$), the integral is dominated by the volume of the void space where $\Sigma_t(x)$ is large.
By choosing $w_\gamma(x) \propto  p_t(x)^\gamma$, the integrand becomes proportional to $ p_t(x)^{1+2\gamma}\Sigma_t(x)$.
Under the reasonable assumption that the signal ambiguity scales inversely with density (i.e., $\Sigma_t(x) \sim  p_t(x)^{-\alpha}\Sigma_0$ for $\alpha > 0$), the $\gamma$-weighting with $\gamma \ge \alpha/2$ ensures that the noise contribution vanishes:
\begin{equation*}
    \lim_{ p_t(x) \to 0}  p_t(x)^{1+2\gamma}\Sigma_t(x) = 0.
\end{equation*}
This suggests that $\gamma$-FM suppresses gradient noise from the voids, concentrating the optimization budget on the high-density region.

\paragraph{When is the variance--density scaling plausible?}
The heuristic scaling $\Sigma_t(x)\propto  p_t(x)^{-\alpha}$ is most plausible for conditional path designs in which the conditional target $u_t(x_t\mid x_1)$ becomes increasingly ill-conditioned in low-density regions of the marginal $ p_t$.
A representative example is Gaussian CFM, where $x_t$ is obtained by adding Gaussian noise and $u_t$ involves a score-like term of the intermediate marginal; in such settings the conditional variance of $u_t$ given $x_t$ typically increases as $ p_t(x_t)$ decreases, reflecting amplification of estimation noise in ``void'' regions.
More generally, for transport-noise interpolations that mix a deterministic drift toward $x_1$ with a stochastic perturbation, the signal-to-noise ratio of the conditional direction deteriorates away from the data manifold, so that $\operatorname{tr}\Sigma_t(x)$ is larger where $ p_t(x)$ is smaller.
Our analysis in Proposition~3.1 should be read in this spirit: it formalizes how $\gamma$-weighting suppresses contributions from such high-variance, low-density regions, rather than requiring an exact power-law identity.

\subsection{Physical Consistency with Liouville Dynamics}\label{Physical}
This subsection is intended as a geometric analogy, not as a derivation of the exact training dynamics of $\gamma$-FM.
We use a classical model problem from optimal transport---the Wasserstein gradient flow of a Tsallis-type energy---to make precise a qualitative mechanism:
density-power weighting suppresses motion in low-density (``void'') regions.
In particular, the resulting continuum dynamics exhibit a degenerate diffusion whose effective diffusivity vanishes as $p\to 0$, which leads to a finite-speed propagation effect.

Consider the generalized $\gamma$-entropy functional:
\begin{equation*}
    \mathcal{F}_\gamma[ p] = \frac{1}{\gamma} \int  p(x)^{\gamma+1} dx.
\end{equation*}
Notice that this functional corresponds exactly to the \emph{self-divergence term} in the definition of the $\gamma$-divergence (up to a sign).
Just as minimizing $\gamma$-divergence statistically ignores outliers as in the variance-reduction argument above, the gradient flow of its associated entropy $\mathcal{F}_\gamma$ physically restricts the spread of probability mass.

The Wasserstein gradient flow is defined by the continuity equation driving the density along the gradient of the variation (see, e.g., \citet{Jordan1998,AmbrosioGigliSavare2008}).
\begin{equation*}
    \partial_t  p = \nabla \cdot \left(  p \nabla \frac{\delta \mathcal{F}_\gamma[ p]}{\delta  p} \right).
\end{equation*}
Calculating the variation $\frac{\delta \mathcal{F}_\gamma[ p]}{\delta  p} = \frac{\gamma+1}{\gamma} p^\gamma$ and substituting it gives the explicit dynamics:
\begin{align}\label{eq:pme} 
    \partial_t  p(x,t) = \frac{\gamma+1}{\gamma} \nabla \cdot ( p(x,t)^\gamma \,\nabla  p(x,t)) = \Delta ( p(x,t)^{1+\gamma}),
\end{align}
Equation \eqref{eq:pme} is the Wasserstein gradient flow of $\mathcal{F}_\gamma$ and will be used here as an exactly analyzable toy model that captures the qualitative effect of $\gamma$-weighting.
For $\gamma>0$, the diffusion is \emph{degenerate}: the effective diffusivity scales like $p^\gamma$ and vanishes as $p\to 0$, which is the mechanism behind finite-speed propagation.

\begin{proposition}[Preservation of Compact Support]\label{prop32}
Let $ p(x, t)$ be the unique weak solution to the PME \eqref{eq:pme}  with $\gamma > 0$, subject to a non-negative initial condition $ p_0 \in L^1(\mathbb{R}^d) \cap L^\infty(\mathbb{R}^d)$.
If the initial support $\operatorname{supp}( p_0)$ is compact, then the support $\operatorname{supp}( p(\cdot, t))$ remains compact for all $t > 0$.
Moreover, there exists a constant $C$ depending on the initial mass such that the support is contained in a ball $\mathcal{B}(0, R(t))$ with
\begin{equation*}
R(t) \le C (1+t)^{\beta}, \quad \text{where } \beta = \frac{1}{d\gamma + 2}.
\end{equation*}
\end{proposition}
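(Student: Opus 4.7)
The plan is to control the support of the weak solution $p(\cdot,t)$ by an explicit source-type Barenblatt--Zel'dovich--Kompaneets (ZKB) self-similar solution of \eqref{eq:pme}, whose support grows at precisely the rate $(1+t)^{1/(d\gamma+2)}$, and then to transfer this bound to $p$ via the standard comparison principle for weak solutions of the porous medium equation.

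Concretely, I would first recall that with exponent $m = 1+\gamma$, \eqref{eq:pme} admits the one-parameter family of self-similar solutions
$$\Phi_M(x,t) = t^{-\alpha}\left(C_M - \kappa\,\frac{|x|^2}{t^{2\alpha/d}}\right)_+^{1/\gamma},\qquad \alpha = \frac{d}{d\gamma+2},\ \kappa = \frac{\gamma\alpha}{2d(1+\gamma)},$$
where $C_M>0$ is fixed by $\int \Phi_M(\cdot,t)\,dx = M$. Direct substitution shows that $\Phi_M$ is a non-negative weak solution for $t>0$ concentrating on $M\delta_0$ as $t\to 0^+$, and its instantaneous support is $\overline{\mathcal{B}(0, r_M(t))}$ with $r_M(t) = (C_M/\kappa)^{1/2}\,t^{1/(d\gamma+2)}$. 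The target exponent $\beta = 1/(d\gamma+2)$ in the conclusion is therefore precisely the Barenblatt spreading rate.

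Next, to produce a supersolution dominating $p_0$ at $t=0$, I would time-shift: set $\tilde p(x,t) := \Phi_M(x,t+\tau)$ and choose $\tau>0$ and $M$ large enough so that $\tilde p(x,0)\ge p_0(x)$ pointwise. This is feasible because $p_0\in L^1(\mathbb{R}^d)\cap L^\infty(\mathbb{R}^d)$ is compactly supported: first enlarge $M$ so that $r_M(\tau)$ exceeds the radius of $\operatorname{supp}(p_0)$ and the Barenblatt's peak $\tau^{-\alpha}C_M^{1/\gamma}$ dominates $\|p_0\|_\infty$, which is always possible since $C_M\to\infty$ as $M\to\infty$. The comparison principle for weak solutions of \eqref{eq:pme} then yields $p(x,t)\le \Phi_M(x,t+\tau)$ for all $t>0$, hence $\operatorname{supp}(p(\cdot,t))\subset \overline{\mathcal{B}(0, r_M(t+\tau))}$, giving $R(t)\le (C_M/\kappa)^{1/2}(t+\tau)^{1/(d\gamma+2)}\le C(1+t)^{1/(d\gamma+2)}$ with $C$ depending on $\tau$, $M$, $d$, and $\gamma$.

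The hard part is the rigorous justification of the comparison step at $t=0$ when the data are only in $L^1\cap L^\infty$: equation \eqref{eq:pme} is degenerate on $\{p=0\}$ and its weak solutions are merely H\"older continuous, so classical strong maximum-principle arguments do not apply directly. The standard remedy is to regularize both $p_0$ and $\tilde p_0$ by smooth, strictly positive data, invoke classical parabolic comparison for a uniformly parabolic approximation (e.g.\ replacing $p^{1+\gamma}$ by $(p+\varepsilon)^{1+\gamma}$ and letting $\varepsilon\downarrow 0$), and pass to the limit using the $L^1$-contraction property of the semigroup generated by the $m$-Laplacian. This technical machinery is by now standard in the porous-medium literature, and the geometric content of the proposition --- the $(1+t)^{1/(d\gamma+2)}$ spreading law --- is entirely encoded in the explicit Barenblatt profile above.
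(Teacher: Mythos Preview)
Your proposal is correct and follows essentially the same approach as the paper: construct a time-shifted Barenblatt supersolution that dominates the compactly supported, bounded initial datum, and then invoke the comparison principle for the porous medium equation to trap $\operatorname{supp}(p(\cdot,t))$ inside the Barenblatt support, which grows like $(t+\tau)^{1/(d\gamma+2)}$. Your additional remarks on the regularization needed to justify comparison for merely $L^1\cap L^\infty$ data go slightly beyond the paper's sketch, but the core argument is identical.
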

\begin{proof}
The proof relies on the Comparison Principle for the Porous Medium Equation \citep{vazquez2007porous}. 
The principle states that if two solutions $u$ and $v$ satisfy $u(x,0) \le v(x,0)$ everywhere, then $u(x,t) \le v(x,t)$ for all $t > 0$.
We construct an explicit supersolution using the Barenblatt--Pattle solution $B(x,t; M)$, which represents the diffusion of a Dirac mass $M$.
Let $B(x, \tau; M)$ be the Barenblatt profile centered at the origin with mass $M$ at a time shift $\tau > 0$:
\begin{equation*}
B(x, \tau;M) = \tau^{-\alpha} \left( C(M) - \kappa \tau^{-2\beta} |x|^2 \right)_+^{1/\gamma},
\end{equation*}
where $(\cdot)_+ = \max(\cdot, 0)$.
Since the initial data $ p_0$ is bounded and has compact support, we can choose a sufficiently large mass $M$ and a time shift $\tau > 0$ such that the Barenblatt profile covers the initial data:
\begin{equation*}
 p_0(x) \le B(x, \tau) \quad \text{for all } x \in \mathbb{R}^d.
\end{equation*}
By the Comparison Principle, this ordering is preserved for all subsequent times $t > 0$:
\begin{equation*}
 p(x,t) \le B(x, t+\tau).
\end{equation*}
The support of the Barenblatt solution $B(\cdot, t+\tau)$ is explicitly known to be a ball of radius
\begin{equation*}
R_{B}(t) = \sqrt{\frac{C(M)}{\kappa}} (t+\tau)^{\beta}.
\end{equation*}
Since $0 \le  p(x,t) \le B(x, t+\tau)$, the support of $ p$ must be contained within the support of $B$.
Thus, $$\operatorname{supp}( p(\cdot, t)) \subseteq \mathcal{B}(0, R_B(t)),$$ which implies that the support remains compact and expands at a rate of at most $O(t^\beta)$.
In the limit $\gamma \to 0$, the exponent $\beta \to 1/2$, but the Barenblatt profile converges to a Gaussian which is strictly positive everywhere. 
Thus, the strict containment within a finite ball holds if and only if $\gamma > 0$.
\end{proof}
In the toy model \eqref{eq:pme}, the limit $\gamma\to 0$ reduces to the heat equation, whose solutions become instantly positive everywhere, reflecting infinite-speed propagation.
By contrast, for $\gamma>0$ the degeneracy at $p\approx 0$ yields an evolving interface and a finite-propagation behavior, as formalized in Proposition~\ref{prop32}.
While $\gamma$-FM does \emph{not} literally solve \eqref{eq:pme}, the same mechanism provides a useful intuition:
when the weight behaves like $w_\gamma(x,t)\propto p_t(x)^\gamma$, updates are strongly down-weighted in low-density regions, and empirically this reduces spurious mass placed in ``void'' areas (a ``void rejection'' effect).

\begin{remark}[Generalized entropy and maximum entropy principle]
The functional
\[
  \mathcal{F}_\gamma[ p]
  = \frac{1}{\gamma} \int  p(x)^{\gamma+1}\,dx
\]
is, up to an affine rescaling, the negative of the Tsallis
$q$-entropy with $q = 1+\gamma$ \citep{Tsallis1988}.
It is well known that maximizing the Tsallis entropy under
mass and second-moment constraints,
\[
  \int  p(x)\,dx = 1,
  \qquad
  \int \|x\|^2  p(x)\,dx = m_2,
\]
yields generalized Gaussian (or $q$-Gaussian) densities of the form
\[
   p_{q}(x)
  =
  Z^{-1}
  \Bigl(1 - (1-q)\beta\|x-\mu\|^2\Bigr)_{+}^{\frac{1}{1-q}},
\]
for suitable parameters $\beta>0$ and $\mu\in\mathbb{R}^d$.
These $q$-Gaussians are precisely the self-similar Barenblatt
profiles of the porous medium equation \eqref{eq:pme} for an
appropriate correspondence between $q$ and the nonlinearity
exponent $1+\gamma$; see, e.g., \citet{Malacarne2001,Takatsu2012}.
In particular, they exhibit compact support when $\gamma>0$,
providing concrete examples of the finite-support behaviour
described in Proposition \ref{prop32}.
\end{remark}

\subsection{Motivating Example: 1D Double-Well Potential}
\label{sec:toy_example}

To visualize the macroscopic effect of our weighting scheme, it is instructive to consider a one-dimensional toy problem where the dynamics can be analyzed exactly.
Consider a target distribution defined by a double-well potential $V(x) = \frac{1}{4}x^4 - \frac{3}{4}x^2$, where the target density satisfies $p_{data}(x) \propto \exp(-V(x))$.
We analyze the probability flow transporting a standard Gaussian noise $\mathcal{N}(0,1)$ to this target.

 The $\gamma$-weighted continuity equation can be mapped to a nonlinear Fokker--Planck equation with density-dependent diffusion:
\begin{equation}
    \partial_t  p_t = \nabla \cdot ( p_t \nabla V) + D \nabla \cdot ( p_t^\gamma \,\nabla  p_t),
    \label{eq:nonlinear_fp}
\end{equation}
where $D$ is a diffusion constant.
The behavior of this system critically depends on $\gamma$:

\begin{itemize}
    \item \textbf{Case $\gamma=0$ (Heat Equation):} The diffusion term becomes linear ($D\Delta  p_t$). A fundamental property of the Heat Equation is its \emph{infinite speed of propagation}. As illustrated in Figure~\ref{fig:toy_example} (Standard FM), probability mass instantaneously leaks into the high-potential barrier regions (voids) between the wells. This corresponds to the model "hallucinating" paths where no data exists.
    
    \item \textbf{Case $\gamma>0$ (Porous Medium Equation):} Eq.~\eqref{eq:nonlinear_fp} becomes the Porous Medium Equation (PME).  A hallmark of the PME is its \emph{finite speed of propagation}. The effective diffusivity $D_{\text{eff}} \propto  p_t^\gamma$ vanishes in low-density regions. Consequently, the diffusion physically stops at the boundaries of the wells. As shown in Figure~\ref{fig:toy_example} ($\gamma$-FM), this creates a sharp geometric barrier that confines the flow to the main modes, preventing leakage into the void.
\end{itemize}

{This analytical example should be read as a concrete illustration of the ``Manifold Focusing'' effect: our dynamic weighting $w_\gamma$ mimics the finite-propagation intuition of the PME and helps explain why the learned flow tends to avoid low-density voids, rather than as a derivation of the exact neural training dynamics.}

\begin{figure}[t]
    \centering
    \includegraphics[width=0.7\linewidth]{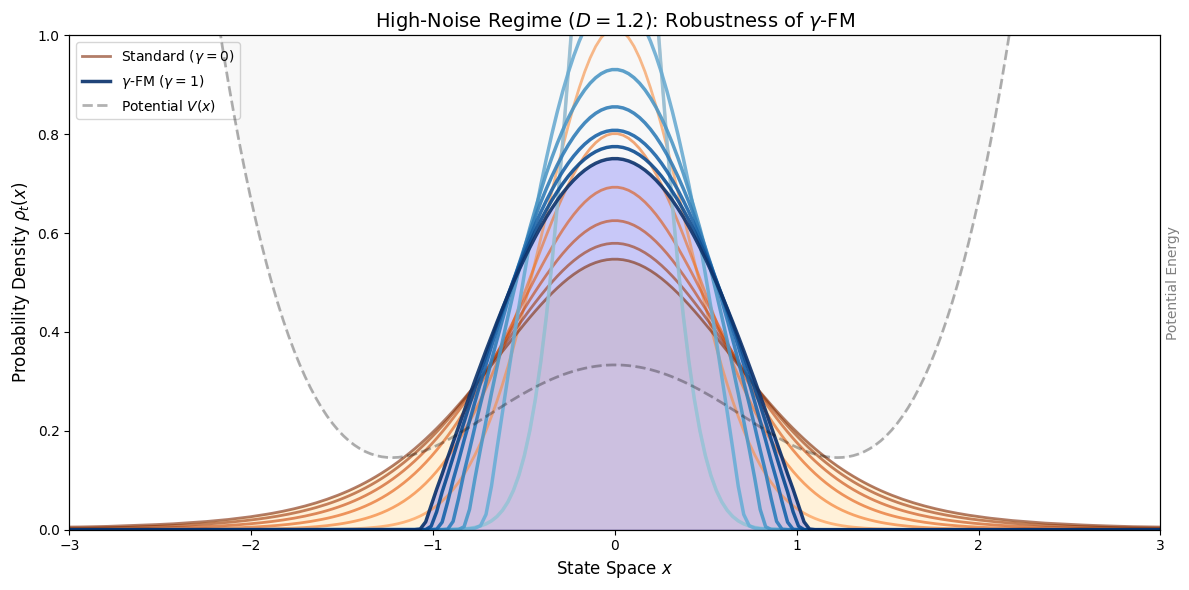} 
    \caption{\textbf{Finite vs. Infinite Propagation.} 
    Evolution of density under a double-well potential. 
    \textbf{(Left) Standard FM ($\gamma=0$):} Corresponds to linear diffusion (Heat Equation), causing probability mass to leak continuously into the low-density barrier (void). 
    \textbf{(Right) $\gamma$-FM ($\gamma=1$):} Corresponds to nonlinear diffusion (Porous Medium Equation) with finite propagation speed. The flow respects the potential barrier, keeping the mass tightly confined to the modes. This illustrates the mechanism of void rejection.}
    \label{fig:toy_example}
\end{figure}

\subsection{Geometric Foundation: The \texorpdfstring{$\gamma$}{gamma}-Stein Viewpoint}
\label{sec:stein_geometry}

The physical behavior described by the PME (Section \ref{Physical}) is not accidental; it arises from the intrinsic geometry induced by the density weighting.
While standard Flow Matching minimizes kinetic energy in a flat Euclidean geometry, we argue that $\gamma$-FM minimizes energy on a statistical manifold endowed with a density-dependent metric.

\textbf{The $\gamma$-Stein Metric.}
From the perspective of Information Geometry \citep{eguchi2009information}, the $\gamma$-divergence generates a Riemannian metric structure on the space of probability densities. 
The $\gamma$-Stein operator for robust
inference with unnormalized models was studied in \citet{eguchi2025robust}.
Following standard conventions in information geometry, we hereinafter denote the parametric family of model densities by $q_\theta$.
The weighting $w_\gamma \approx q_\theta^\gamma$ in our objective (Eq.~\ref{g-loss}) naturally corresponds to the \textit{$\gamma$-weighted Fisher information metric} $g^{(\gamma)}$ \citep{Fujisawa2008, Matsuzoe2017}:
\begin{equation}
    g_{ij}^{(\gamma)}(\theta) = \int_{\mathbb{R}^d} \partial_{\theta_i} \log q_\theta(x) \, \partial_{\theta_j} \log q_\theta(x) \, q_\theta(x)^{1+\gamma} dx.
    \label{eq:gamma_metric}
\end{equation}
This metric measures distance based on the \textit{escort measure} $d\mu_\gamma \propto q^{1+\gamma} dx$.
Crucially, regions with low density $q(x) \approx 0$ make negligible contribution to the metric tensor.
Geometrically, this means that "distances" in the void regions are compressed to zero, effectively removing them from the optimization landscape.  The detailed discussion is given in Appendix \ref{appendix-A}.

\textbf{Flow Matching as Geodesic Optimization.}
Let $\{q_\theta : \theta \in \Theta\}$ be a parametric family of
densities on $\mathbb{R}^d$, and let $t \mapsto \theta(t)$ be a
time-dependent curve in parameter space with associated path of
densities $q_{\theta(t)}$. 
A probability flow $(q_{\theta(t)},v_t)_{t\in[0,1]}$ satisfies the continuity
equation
\[
  \partial_t q_{\theta(t)}(x) + \nabla\cdot\bigl(q_{\theta(t)}(x)\,v_t(x)\bigr) = 0 .
\]
For a fixed $\theta$, we equip the space of vector fields with the
$\gamma$-weighted inner product
\[
  \langle u, v \rangle_{\theta,\gamma}
  :=
  \int_{\mathbb{R}^d}
    u(x)^\top v(x)\, q_\theta(x)^{1+\gamma}\, \mathrm{d}x ,
\]
and denote by $L^2_\gamma(q_\theta)$ the corresponding Hilbert space.
The $\gamma$-Stein operator associated with $q_\theta$ is defined by
\[
  \mathcal{A}_{q_\theta}^{(\gamma)} f(x)
  :=
  q_\theta(x)^{-1}\,
  \nabla\!\cdot\!\bigl(q_\theta(x)^{\gamma+1} f(x)\bigr),
  \qquad
  f:\mathbb{R}^d \to \mathbb{R}^d .
\]
We regard the tangent space of the statistical manifold at $\theta$ as
the closure (in $L^2_\gamma(q_\theta)$) of the range of
$\mathcal{A}_{q_\theta}^{(\gamma)}$:
\[
  T_\theta \mathcal{M}_\gamma
  :=
  \overline{
    \bigl\{
      \mathcal{A}_{q_\theta}^{(\gamma)} f
      : f \in C_c^\infty(\mathbb{R}^d,\mathbb{R}^d)
    \bigr\}
  } .
\]
In other words, $T_\theta\mathcal{M}_\gamma$ consists of all
$\gamma$-Stein-type velocity fields that preserve total mass under the
weighted geometry.

The $\gamma$-weighted flow-matching objective can be written as
\[
  \mathcal{L}_\gamma
  =
  \int_0^1
    \bigl\| v_t - v_t^\ast \bigr\|_{L^2_\gamma(q_{\theta(t)})}^2
  \,\mathrm{d}t ,
\]
where $v_t^\ast$ is the ideal velocity field induced by the target
transport.
For each $t$, the minimizer $u_t$ of $\mathcal{L}_\gamma$ over
$T_{\theta(t)}\mathcal{M}_\gamma$ is the orthogonal projection of
$v_t^\ast$ onto the tangent space with respect to
$\langle\cdot,\cdot\rangle_{\theta(t),\gamma}$.
This motivates the definition of the projection operator
\[
  \Pi_{\theta,\gamma}
  :
  L^2_\gamma(q_\theta) \to T_\theta\mathcal{M}_\gamma,
  \qquad
  \Pi_{\theta,\gamma}[f]
  :=
  \arg\min_{u \in T_\theta\mathcal{M}_\gamma}
  \bigl\| f - u \bigr\|_{L^2_\gamma(q_\theta)}^2 .
\]
In particular, $u_t = \Pi_{\theta(t),\gamma}[v_t^\ast]$ is the
$\gamma$-Stein projection of the ideal velocity field onto the
statistical manifold.

Following Appendix~A, the $\gamma$-Stein connection
$\nabla^{(\gamma)}$ is characterised by the requirement that the
covariant derivative of a time-dependent velocity field $u_t$ along
a curve $t\mapsto\theta(t)$ is obtained by projecting the ordinary
time derivative back onto the tangent space:
\[
  D_t^{(\gamma)} u_t
  :=
  \Pi_{\theta(t),\gamma}\bigl[\partial_t u_t\bigr] .
\]
A curve $\theta(t)$ is a (parametric) geodesic of the
$\gamma$-Stein geometry if its associated velocity field
$u_t \in T_{\theta(t)}\mathcal{M}_\gamma$ is covariantly constant,
that is,
\[
  D_t^{(\gamma)} u_t
  =
  \Pi_{\theta(t),\gamma}\bigl[\partial_t u_t\bigr]
  =
  0
  \quad \text{for all } t\in[0,1].
\]
{In this idealized geometry, minimizing $\mathcal{L}_\gamma$ over admissible flows
$\bigl(q_{\theta(t)}, u_t\bigr)_{t\in[0,1]}$ may be interpreted as searching
for geodesic-like curves on the statistical manifold endowed with the
$\gamma$-Stein metric and connection.}
The case $\gamma=0$ reduces to the flat $L^2$ geometry, where the
tangent space coincides with $L^2(q_{\theta(t)})$ and $\Pi_{\theta(t),0}$ is the
identity, so that geodesics correspond to straight lines and the
velocity field must be defined everywhere.
For $\gamma>0$, the metric is weighted by $q_{\theta(t)}^{1+\gamma}$, so that
directions supported in low-density regions have negligible norm.
{This does not prove that the implemented algorithm exactly follows geodesics of the escort geometry; rather, it motivates why the learned velocity field tends to concentrate on the high-density manifold through the $\gamma$-Stein projection.}

\subsection{Implicit Geometric Regularization}\label{implicit}
We now formalize the implicit-regularization intuition stated earlier by deriving a Sobolev-type roughness functional induced by the $\gamma$-weighted objective.
Let $J_\theta(x,t) = \nabla_x v_\theta(x,t)$ be the Jacobian of the model.
The stiffness of the ODE solver is controlled by the Lipschitz constant $L(t) \approx \sup_x \|J_\theta(x,t)\|_F$.

In unweighted regression, minimizing the loss in voids (where the target $u_t$ is chaotic) requires $v_\theta(x,t)$ to change rapidly, driving $\|J_\theta(x,t)\|_F$ to be large.
We formalize the regularization effect as a bound on the weighted Sobolev norm:
\begin{equation}
    \mathcal{R}_{\text{roughness}}(t) \coloneqq \int_{\mathbb{R}^d} q_{\theta(t)}(x) w_\gamma(x,t) \| \nabla_x v_\theta(x,t) \|_F^2 dx.
    \label{eq:weighted_sobolev}
\end{equation}

By downweighting the voids ($w_\gamma(x,t) \to 0$), $\gamma$-FM relaxes the constraint on $v_\theta(x,t)$ in these regions.
Assuming the neural network has a spectral bias towards low-frequency functions, removing the high-frequency targets in the voids implies that the minimizer $v_\theta^*(x,t)$ will effectively default to a smooth interpolation in the ambient space.

Empirically, we observe a significant reduction in the Jacobian norm within the ambient space:
\begin{equation*}
    \mathbb{E}_{x \sim p_{\mathrm{ambient}}} \left[ \| \nabla_x v_\theta^{(\gamma > 0)}(x,t) \|_F \right] \ll \mathbb{E}_{x \sim p_{\mathrm{ambient}}} \left[ \| \nabla_x v_\theta^{(0)}(x,t) \|_F \right],
\end{equation*}
where $p_{\mathrm{ambient}}$ represents the distribution of the void regions (e.g., uniform noise in the bounding box).
This reduction in the local Lipschitz constant implies a less stiff ODE, permitting adaptive solvers to take larger integration steps.
This mechanism is consistent with the improved NFE behavior observed in our experiments, although we do not claim a universal causal guarantee beyond the tested setting.

\paragraph{A Dirichlet--spectral perspective.}

To connect the empirical reduction in \eqref{eq:weighted_sobolev} with a more geometric picture, it is convenient to introduce the weighted Dirichlet form associated with the escort weight.  
Such variational limits of discrete regularizers on data manifolds have been rigorously studied in the context of Optimal Transport by \citet{hamm2025manifold}.
Fix a time $t$ and write $p_t$ for the marginal density of $x_t$.
Using the weighting factor $w_\gamma(x,t) \propto q_{\theta(t)}(x)^\gamma$, we introduce the escort measure
\[
\mathrm d\mu_{\gamma,t}(x) \coloneqq q_{\theta(t)}(x) w_\gamma(x,t)\,\mathrm dx.
\]
We then define the weighted Dirichlet form associated with this measure:
\begin{equation}
  \mathcal{E}_{\gamma,t}(f,f)
  \coloneqq
  \int_{\mathbb{R}^d} \|\nabla_x f(x)\|^2\,\mathrm d\mu_{\gamma,t}(x).
  \label{eq:dirichlet_gamma}
\end{equation}
By integration by parts, this form is associated with the self-adjoint operator
\begin{equation}
  \mathcal{L}_{\gamma,t} f
  \coloneqq
   q_{\theta(t)}^{-1}\nabla_x\!\cdot\!\bigl( q_\gamma(x,t)\,\nabla_x f(x)\bigr),
  \label{eq:weighted_laplacian}
\end{equation}
with $  q_\gamma(x,t)=q_{\theta(t)}(x) w_\gamma(x,t)$ in the weighted Hilbert space $L^2(\mu_{\gamma,t})$.  The Poincar\'e inequality for $\mu_{\gamma,t}$ can then be written as
\begin{equation}
  \int_{\mathbb{R}^d} \bigl(f(x) - \bar f_{\gamma,t}\bigr)^2\,\mathrm d\mu_{\gamma,t}(x)
  \le
  \frac{1}{\lambda_{\gamma,t}}\,
  \mathcal{E}_{\gamma,t}(f,f),
  \qquad
  \bar f_{\gamma,t} \coloneqq \int f\,\mathrm d\mu_{\gamma,t},
  \label{eq:poincare_gamma}
\end{equation}
where $\lambda_{\gamma,t} > 0$ is the spectral gap, that is, the smallest positive eigenvalue of $-\mathcal{L}_{\gamma,t}$.  
When $q_{\theta(t)}$ is strongly log-concave with curvature lower bound $\kappa > 0$, the escort measure $\mu_{\gamma,t}$ inherits a stronger curvature of order $(1+\gamma)\kappa$, and the spectral gap $\lambda_{\gamma,t}$ grows at least linearly in $(1+\gamma)$.

To make the link with \eqref{eq:weighted_sobolev} more explicit, let us consider an idealized, regularized regression problem at a fixed time $t$ and for a single scalar coordinate of the vector field.  
Let $u_t$ denote the target component and consider functions $f:\mathbb{R}^d \to \mathbb{R}$.  
We introduce the Tikhonov-regularized objective
\begin{equation}
  \mathcal{J}_{\gamma,t}(f)
  \coloneqq
  \int_{\mathbb{R}^d} q_{\theta(t)}(x)w_\gamma(x,t)\,\bigl(f(x) - u_t(x)\bigr)^2\,\mathrm dx
  +
  \tau\,\mathcal{E}_{\gamma,t}(f,f),
  \qquad \tau > 0,
  \label{eq:reg_gamma_objective}
\end{equation}
which is the population analogue of a $\gamma$-FM regression loss with an explicit weighted Sobolev penalty.  
The next proposition studies an idealized population objective where we add an explicit weighted Sobolev penalty to the $\gamma$-FM loss. 
Although Algorithm~1 does not explicitly add a Tikhonov penalty, the analysis above identifies a roughness functional naturally associated with the $\gamma$-weighted objective. In practice, this provides a mechanistic explanation for the observed smoothness improvements, which may be further amplified by the implicit bias of SGD (see, e.g., \citet{rahaman2019spectral,xu2020frequency,jin2023implicit}).
The Dirichlet-regularized objective in \eqref{eq:reg_gamma_objective} should therefore be viewed as a tractable surrogate that makes the geometric effect of the escort weighting explicit in the eigenbasis of the weighted Laplacian.
Let $f_{\gamma,t}^\ast$ denote the unique minimizer of $\mathcal{J}_{\gamma,t}$.

\begin{proposition}[Spectral shrinkage under $\gamma$-weighted Dirichlet regularization]
\label{spectral}
{Let $\{\varphi_k^{(\gamma,t)}\}_{k\ge 0}$ be an orthonormal eigenbasis of $L^2(\mu_{\gamma,t})$ consisting of eigenfunctions of $-\mathcal{L}_{\gamma,t}$,}
\begin{equation}
  -\mathcal{L}_{\gamma,t}\,\varphi_k^{(\gamma,t)}
  =
  \mu_k^{(\gamma,t)}\,\varphi_k^{(\gamma,t)},
  \qquad
  0 = \mu_0^{(\gamma,t)} < \mu_1^{(\gamma,t)} \le \mu_2^{(\gamma,t)} \le \cdots.
  \label{eq:spectral_gamma}
\end{equation}
{Expand the target as}
\begin{equation}
  u_t(x)
  =
  \sum_{k\ge 0} b_k^{(\gamma,t)}\,\varphi_k^{(\gamma,t)}(x).
  \label{eq:target_expansion}
\end{equation}
{Then the minimizer $f_{\gamma,t}^\ast$ of \eqref{eq:reg_gamma_objective} admits the expansion}
\begin{equation}
  f_{\gamma,t}^\ast(x)
  =
  \sum_{k\ge 0} a_k^{(\gamma,t)}\,\varphi_k^{(\gamma,t)}(x),
  \qquad
  a_k^{(\gamma,t)}
  =
  \frac{1}{1+\tau\,\mu_k^{(\gamma,t)}}\,b_k^{(\gamma,t)}.
  \label{eq:coeff_shrinkage}
\end{equation}
{Moreover, its Dirichlet energy is given by}
\begin{equation}
  \mathcal{E}_{\gamma,t}\bigl(f_{\gamma,t}^\ast,f_{\gamma,t}^\ast\bigr)
  =
  \sum_{k\ge 1}
  \frac{\mu_k^{(\gamma,t)}}{\bigl(1+\tau\,\mu_k^{(\gamma,t)}\bigr)^2}\,
  \bigl(b_k^{(\gamma,t)}\bigr)^2.
  \label{eq:roughness_spectral}
\end{equation}
\end{proposition}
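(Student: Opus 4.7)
The plan is to diagonalize $\mathcal{J}_{\gamma,t}$ in the assumed eigenbasis $\{\varphi_k^{(\gamma,t)}\}$ of $-\mathcal{L}_{\gamma,t}$ and thereby reduce the minimization to an uncoupled sequence of one-dimensional quadratic problems, yielding a Wiener-type shrinkage formula. This is the standard Tikhonov calculation transposed to the escort-weighted Hilbert space $L^2(\mu_{\gamma,t})$; the substance lies in pairing the data-fitting term and the Dirichlet penalty inside a single spectral representation.

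First, I would observe that, by definition of the escort measure, $\mathrm d\mu_{\gamma,t} = q_{\theta(t)}\,w_\gamma\,\mathrm dx$, so the fitting term in \eqref{eq:reg_gamma_objective} is exactly $\|f - u_t\|_{L^2(\mu_{\gamma,t})}^2$. Writing $f = \sum_k a_k^{(\gamma,t)}\varphi_k^{(\gamma,t)}$ and using \eqref{eq:target_expansion} together with orthonormality of the eigenbasis in $L^2(\mu_{\gamma,t})$, Parseval's identity converts this term into $\sum_k \bigl(a_k^{(\gamma,t)} - b_k^{(\gamma,t)}\bigr)^2$. Next I would rewrite the Dirichlet form itself in spectral terms: integrating by parts against the divergence-form operator \eqref{eq:weighted_laplacian} gives $\mathcal{E}_{\gamma,t}(f,f) = \langle f,\,-\mathcal{L}_{\gamma,t}\,f\rangle_{L^2(\mu_{\gamma,t})}$ for $f$ in the form domain, and the eigenrelation \eqref{eq:spectral_gamma} combined with self-adjointness then yields $\mathcal{E}_{\gamma,t}(f,f) = \sum_k \mu_k^{(\gamma,t)}\bigl(a_k^{(\gamma,t)}\bigr)^2$.

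Combining these two spectral representations,
\[
  \mathcal{J}_{\gamma,t}(f)
  =
  \sum_k
  \Bigl[
    \bigl(a_k^{(\gamma,t)} - b_k^{(\gamma,t)}\bigr)^2
    +
    \tau\,\mu_k^{(\gamma,t)}\,\bigl(a_k^{(\gamma,t)}\bigr)^2
  \Bigr],
\]
which is a separable sum of strictly convex scalar quadratics. Setting the derivative in each $a_k^{(\gamma,t)}$ to zero gives $a_k^{(\gamma,t)} = b_k^{(\gamma,t)}/(1 + \tau\mu_k^{(\gamma,t)})$, i.e.\ \eqref{eq:coeff_shrinkage}. Uniqueness of $f_{\gamma,t}^\ast$ follows from strict convexity of each summand, and substituting the shrinkage formula into the spectral expression for $\mathcal{E}_{\gamma,t}$ immediately produces \eqref{eq:roughness_spectral}.

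The one point requiring care is purely functional-analytic: one must justify the interchange of summation with the Dirichlet form, i.e.\ ensure that $u_t$ and the candidate minimizer lie in the form domain of $\mathcal{E}_{\gamma,t}$ so that the spectral series converges in the appropriate norm. This can be enforced by restricting to targets with $\sum_k \mu_k^{(\gamma,t)}\bigl(b_k^{(\gamma,t)}\bigr)^2 < \infty$, a weighted Sobolev-type regularity condition on $u_t$; under this assumption, termwise minimization is justified by dominated convergence and the attained minimum is genuinely realized in the form domain. Beyond this bookkeeping, the result is a direct algebraic consequence of the spectral hypothesis already placed in the statement.
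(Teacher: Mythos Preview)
Your proposal is correct and follows essentially the same route as the paper's proof: diagonalize both the data-fitting term and the Dirichlet penalty in the eigenbasis $\{\varphi_k^{(\gamma,t)}\}$, reduce $\mathcal{J}_{\gamma,t}$ to a separable sum of scalar quadratics, minimize termwise to obtain the shrinkage coefficients, and substitute back for the Dirichlet energy. The only differences are cosmetic: you make the integration-by-parts identity $\mathcal{E}_{\gamma,t}(f,f)=\langle f,-\mathcal{L}_{\gamma,t}f\rangle_{L^2(\mu_{\gamma,t})}$ explicit and add a careful remark on the form-domain condition $\sum_k \mu_k^{(\gamma,t)}(b_k^{(\gamma,t)})^2<\infty$, which the paper leaves implicit.
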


\begin{proof}
Since \eqref{eq:reg_gamma_objective} and \eqref{eq:dirichlet_gamma} are quadratic and diagonalizable in the eigenbasis $\{\varphi_k^{(\gamma,t)}\}$, we write
\[
  f(x)
  =
  \sum_{k\ge 0} a_k\,\varphi_k^{(\gamma,t)}(x),
  \qquad
  u_t(x)
  =
  \sum_{k\ge 0} b_k\,\varphi_k^{(\gamma,t)}(x).
\]
Orthogonality in $L^2(\mu_{\gamma,t})$ gives
\[
  \int q_{\theta(t)}(x) w_\gamma(x,t)\,\bigl(f(x)-u_t(x)\bigr)^2\,\mathrm dx
  =
  \sum_{k\ge 0} \bigl(a_k - b_k\bigr)^2,
\]
while \eqref{eq:dirichlet_gamma} and \eqref{eq:spectral_gamma} imply
\[
  \mathcal{E}_{\gamma,t}(f,f)
  =
  \sum_{k\ge 1} \mu_k^{(\gamma,t)}\,a_k^2.
\]
Therefore
\[
  \mathcal{J}_{\gamma,t}(f)
  =
  \sum_{k\ge 0}
  \Bigl[
    \bigl(a_k - b_k\bigr)^2
    +
    \tau\,\mu_k^{(\gamma,t)}\,a_k^2
  \Bigr]
\]
splits into independent one-dimensional problems in the coefficients $a_k$.  
Minimizing each term over $a_k$ yields
\[
  2\bigl(a_k - b_k\bigr) + 2\tau\,\mu_k^{(\gamma,t)}\,a_k = 0,
  \qquad
  \Rightarrow\quad
  a_k = \frac{1}{1+\tau\,\mu_k^{(\gamma,t)}}\,b_k,
\]
which gives \eqref{eq:coeff_shrinkage}.  
Substituting back into $\mathcal{E}_{\gamma,t}(f,f)$ directly yields \eqref{eq:roughness_spectral}.  
\end{proof}
In the case of $\gamma=0$, the measure $\mu_{0,t}$ reduces to the standard density $q_{\theta(t)}$, and Eq. \eqref{eq:coeff_shrinkage} recovers the standard spectral filtering result known in manifold regularization \citep{belkin2005manifold}.
The significance of Proposition \ref{spectral}  lies in the dependence on $\gamma$: as discussed in the proof, increasing $\gamma$ effectively rescales the eigenvalues $\mu_k^{(\gamma,t)}$, thereby intensifying the shrinkage effect on high-frequency modes compared to the standard case.
We suggest a close relation to the Witten Laplacian and the Bakry--Émery curvature in the proof.
Assume that the marginal density $q_{\theta(t)}$ admits a smooth potential $U_t$ such that
$q_{\theta(t)}(x) = \exp(-U_t(x))$.  Then the generator $L_{\gamma,t}$ in \eqref{eq:weighted_laplacian}
can be rewritten as
\[
L_{\gamma,t} f(x)
=
\Delta f(x) + \langle \nabla_x \log w_\gamma(x,t), \nabla_x f(x)\rangle
=
\Delta f(x) - (1+\gamma)\,\langle \nabla_x U_t(x), \nabla_x f(x)\rangle,
\]
which is the Witten (or Bakry--Émery) Laplacian associated with the potential
$(1+\gamma)U_t$.  In the Bakry--Émery $\Gamma_2$ calculus, the curvature-dimension
condition
\[
\nabla_x^2 U_t(x) \succeq \kappa I_d \qquad (\kappa>0)
\]
implies that the carré du champ $\Gamma(f)=\|\nabla_x f\|^2$ and its iterated form
$\Gamma_2(f)$ satisfy
\[
\Gamma_2(f)
\;\coloneqq\;
\frac12\Bigl(L_{\gamma,t}\Gamma(f) - 2\langle \nabla_x f,\nabla_x L_{\gamma,t}f\rangle\Bigr)
\;\ge\;
(1+\gamma)\kappa\,\Gamma(f)
\quad \forall f.
\]
As a consequence, the measure $\mu_{\gamma,t}$ satisfies the Poincar\'e inequality
\eqref{eq:poincare_gamma} with a spectral gap bounded below by
\[
\lambda_{\gamma,t} \;\ge\; (1+\gamma)\kappa.
\]
Thus the escort reweighting $w_\gamma \propto q_{\theta(t)}^{\,1+\gamma}$ simply rescales the
potential in the Witten Laplacian, amplifying curvature and enlarging the spectral
gap.  This provides a geometric justification for our heuristic that the eigenvalues
$\mu_k^{(\gamma,t)}$ of $-L_{\gamma,t}$ grow approximately linearly in $(1+\gamma)$,
and therefore the high-frequency modes in the Dirichlet energy
\eqref{eq:roughness_spectral} are increasingly damped as $\gamma$ increases.

It is noted that the factor
\begin{equation*}
  F(\mu)
  \coloneqq
  \frac{\mu}{\bigl(1+\tau\,\mu\bigr)^2}
  \label{eq:mode_contribution}
\end{equation*}
governs the contribution of an eigenmode with eigenvalue $\mu$ to the roughness \eqref{eq:roughness_spectral}.  
A direct calculation shows
\[
  F'(\mu)
  =
  \frac{1-\tau\,\mu}{\bigl(1+\tau\,\mu\bigr)^3},
\]
so that $F'(\mu) < 0$ whenever $\mu > 1/\tau$.  
In other words, for sufficiently high-frequency modes (large eigenvalues $\mu_k^{(\gamma,t)}$) the Dirichlet contribution
\[
  \frac{\mu_k^{(\gamma,t)}}{\bigl(1+\tau\,\mu_k^{(\gamma,t)}\bigr)^2}\,\bigl(b_k^{(\gamma,t)}\bigr)^2
\]
is a decreasing function of $\mu_k^{(\gamma,t)}$.  
Under curvature assumptions on $q_{\theta(t)}$, the escort operator $-\mathcal{L}_{\gamma,t}$ has eigenvalues that increase with $\gamma$ (roughly $\mu_k^{(\gamma,t)} \approx (1+\gamma)\,\mu_k^{(0,t)}$), so that high-frequency contributions to \eqref{eq:roughness_spectral} are systematically damped as $\gamma$ grows.  
If the target field $u_t$ carries most of its energy in such high-frequency modes, the total roughness $\mathcal{E}_{\gamma,t}(f_{\gamma,t}^\ast,f_{\gamma,t}^\ast)$ decreases as a function of $\gamma$.

In practice, the neural network vector field $v_\theta(x,t)$ is not explicitly regularized by \eqref{eq:reg_gamma_objective}.  
However, stochastic gradient descent with a finite-capacity network is known to exhibit an implicit bias towards functions with small Sobolev norm.  
The above spectral calculation suggests that the $\gamma$-dependent escort geometry further amplifies this bias on the data manifold: the weighted roughness
\begin{equation}\label{roughness}
  \mathcal{R}_{\text{roughness}}
  =
  \int_{\mathbb{R}^d} q_{\theta(t)}(x) w_\gamma(x,t)\,\|\nabla_x v_\theta(x,t)\|_F^2\,\mathrm dx
\end{equation}
is dominated by high-frequency modes whose eigenvalues increase with $\gamma$, and these modes are exactly those that are most strongly damped by the effective Tikhonov term.  
This provides a theoretical explanation for the empirical trend observed in the following section, where the smoothness metric decreases as $\gamma$ increases, and supports the interpretation of $\gamma$-FM as a geometrically informed regularizer that suppresses oscillatory behavior in void regions while preserving expressiveness near the data manifold.
In practice we do not add the Dirichlet term explicitly; instead the combination of spectral bias and finite training time acts as an effective Sobolev regularizer.

\subsection{Theoretical Selection of \texorpdfstring{$\gamma$}{gamma} via Geometric Selection Criterion}
\label{sec:sic_theory}

A critical practical question is the selection of the density-weighting parameter $\gamma$. 
Typically, the method of cross-validation is employed, but
the computation for the current task is expensive and infeasible.
Accordingly, we construct a tractable \textit{function-space proxy} suitable for Flow Matching, which we term the Geometric Selection Criterion (GSC):
\begin{equation}
    \mathrm{GSC}(\gamma) ={\mathrm{MMD}^2(p_{\mathrm{data}}, p_{\theta_\gamma})}+ \lambda {\mathcal{R}_{\mathrm{roughness}}(v_{\theta_\gamma})},
    \label{eq:gsc_proxy}
\end{equation}
where $\lambda > 0$ is a trade-off parameter (we set $\lambda = 1$).
Here, the squared Maximum Mean Discrepancy (MMD) serves as the bias proxy, measuring generation quality. 
The roughness functional $\mathcal{R}_{\mathrm{roughness}}(v_{\theta_\gamma})$ derived in \eqref{roughness} serves as the stability penalty.
Minimizing the GSC may help identify a $\gamma$ that balances faithful data reconstruction and geometric regularity.

\section{Experiments}
\label{sec:experiments}

\subsection{Synthetic Verification: Implicit Regularization}
\label{sec:synthetic_exp}

Before evaluating our method on complex image datasets, we first verify the "Implicit Geometric Regularization" hypothesis (Section \ref{implicit}) in a controlled high-dimensional setting.
A fundamental challenge in Flow Matching is the "curse of dimensionality": as the dimension $D$ increases, the relative volume of the data manifold vanishes, and the vast majority of the integration domain becomes empty "void" space.

{
\textbf{Experimental Setup.}
To make the weighting mechanism visually transparent, we construct a ring-shaped target distribution in a high-dimensional ambient space $\mathbb{R}^{20}$.
The first two coordinates contain the ring structure, while the remaining coordinates consist of low-level Gaussian noise.
We then train both standard FM $(\gamma=0)$ and $\gamma$-FM $(\gamma=1)$, and visualize a two-dimensional slice of the learned fields.
For this toy experiment, we use the same practical shared-time ordering surrogate as in the main experiments, and inspect a late interpolation slice at $t=0.85$ so that the ring-support region, the center void, and the outside-support region are visually separated.

\textbf{Results: Where the weight acts.}
Figure~\ref{fig:high_dim_ring} is designed to separate three distinct objects:
the location of path mass, the location where the practical weighting proxy acts, and the geometry of the resulting learned field.
Panel (a) shows that at the displayed late time the interpolation-path density is concentrated near the ring.
Panel (b) shows the corresponding rank-based monotone weighting proxy, which places larger emphasis near that support region and downweights both the center void and the far exterior.
Panels (c) and (d) then compare the learned speed maps under FM and $\gamma$-FM, while panel (e) summarizes the same comparison through radial profiles.

The main qualitative message is not that the practical surrogate is a literal density estimator, but that it provides a useful within-batch ordering signal.
In this toy example, that ordering concentrates learning pressure near the ring-support region and weakens it in low-density regions.
As a result, the learned field under $\gamma$-FM is visually smoother in the center void and the outside-support region than the field learned by standard FM.
This revised visualization therefore clarifies where the weighting acts and how it changes the learned geometry, which was less transparent in the previous version of the figure.

\begin{figure}[t]
    \centering
    \includegraphics[width=\linewidth]{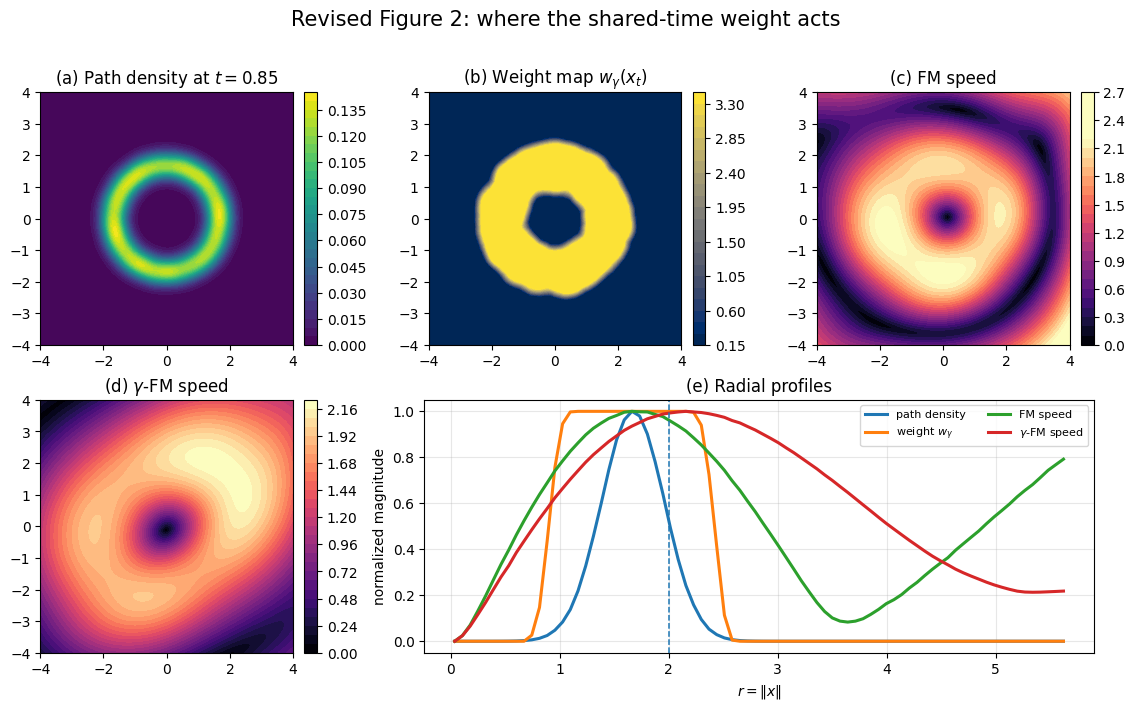} 
    \caption{{Where the shared-time weighting proxy acts in the ring toy experiment.}
    Panel (a) shows a two-dimensional slice of the interpolation-path density at a late shared time $t=0.85$, where the path mass is concentrated near the ring support.
    Panel (b) shows the corresponding practical rank-based weighting proxy $w_\gamma(x_t)$, which places larger emphasis near the ring-support region and downweights both the center void and the outside-support region.
    Panels (c) and (d) compare the learned speed maps of standard FM and $\gamma$-FM, respectively.
    Panel (e) shows radial profiles of the path density, the weighting proxy, and the learned speeds.
    The figure is intended to separate where the path places mass, where the weighting proxy acts, and how the learned field changes as a result.}
    \label{fig:high_dim_ring}
\end{figure}
}

\subsection{Latent-flow modelling of CIFAR-10}\label{cifar-10}

Our experimental setting follows the general latent-flow paradigm of
\citet{Lipman2023} in the sense that we train a flow
in the latent space of a pre-trained autoencoder rather than directly
in pixel space.
Concretely, we first train an autoencoder on CIFAR-10 and then freeze
the encoder and decoder; all flow-matching experiments are carried out
in the resulting latent space.

In contrast to \citet{Lipman2023}, who employ the
standard (unweighted) flow-matching objective and focus on high-resolution
image synthesis, we keep the latent architecture fixed and vary only the
\emph{regression geometry} in latent space via the $\gamma$-weighted loss
\eqref{g-loss}.
{Thus, the comparisons in Table~\ref{tab:cifar_main} and the accompanying discussion are designed so that differences in MMD, smoothness, NFE, and decoded-image metrics can be attributed to the effect of the $\gamma$-weighting rather than to changes
in the autoencoder or flow architecture.}

\subsection{Evaluation on latent CIFAR-10}
\label{subsec:cifar_expanded}
{
To address whether the observed gains of $\gamma$-FM arise specifically from density-weighted geometry rather than from generic smoothing alone, we expand the latent CIFAR-10 evaluation in four directions.
Throughout this subsection, we work under a \emph{robust-target} latent protocol rather than a purely clean one: the flow model is trained on an inlier--outlier mixture in latent space, while decoded-image metrics are evaluated against a held-out inlier-class test set.
This setting allows us to study whether density weighting organizes the regression geometry toward the dominant data manifold, instead of merely acting as a generic smoother.
As a complementary sanity check, we also repeated the same latent CIFAR-10 pipeline under a clean-target protocol, in which the flow model is trained on inlier latents only.
That rerun is useful for separating two questions:
whether the practical weighting mechanism remains operational on clean data,
and whether it yields a measurable gain when the regression geometry is not distorted by contaminated or off-support targets.
Empirically, the weighting remained active in the clean rerun, but the performance gain largely disappeared, with shared-time FM already competitive in FID, Recall, and Coverage.
We therefore use the robust-target setting as the main empirical regime for evaluating the practical advantage of the proposed method.

First, we build the training protocol so that the dynamic density proxy is computed from a \emph{shared-time minibatch}, as described in Section \ref{Particle} and Algorithm~\ref{alg:gammafm_shared_t}.
This ensures that the $k$-NN statistic is computed within a particle cloud sampled from a single marginal $p_t$ induced by the robust-target training distribution.

Second, we add an explicit regularization baseline:
\[
\mathrm{FM+Jac},
\]
which augments the standard flow-matching loss with a Jacobian penalty,
\[
\mathcal L_{\mathrm{FM+Jac}}(\theta)
=
\mathcal L_{\mathrm{FM}}(\theta)
+
\lambda_J\,
\mathbb E_{t,x,\varepsilon}
\bigl[
\|J_x v_\theta(x,t)^\top \varepsilon\|^2
\bigr],
\]
where $\varepsilon\sim\mathcal N(0,I)$ (or a Rademacher probe) and the Jacobian norm is estimated by a Hutchinson-type trace estimator.
This baseline is trained with the same architecture, optimizer, number of updates, and ODE solver settings as the proposed method.

Third, beyond latent-space discrepancy and vector-field smoothness, we report image-space generative metrics after decoding.
Specifically, we evaluate:
\begin{itemize}
\item \textbf{FID} (lower is better), to assess overall sample quality with respect to the inlier-class reference set;
\item \textbf{Recall} (higher is better), to quantify diversity;
\item \textbf{Coverage} (higher is better), to assess mode support.
\end{itemize}
These metrics complement the latent-space RBF-MMD and the ambient roughness proxy
\[
\tilde{R}_{\mathrm{roughness}}(v_\theta)
=
\mathbb E_{x\sim \mathcal N(0,I)}
\bigl[\|\nabla_x v_\theta(x)\|_F^2\bigr],
\]
thereby allowing us to examine the trade-off between vector-field smoothness, robustness to off-manifold targets, and inlier-oriented decoded-image quality.

Fourth, we make the model-selection protocol explicit.
We compare the following models under matched computational budgets:
\[
\mathrm{FM},\qquad
\gamma\text{-}\mathrm{FM}\ (\text{shared-}t),\qquad
\mathrm{FM+Jac},
\]
and optionally also report $\gamma$-FM$+$Jac when discussing whether density-weighting and explicit regularization are complementary.
For fairness, the primary comparison is made under the same network, optimizer, training steps, and the same fixed-step midpoint solver budget.
We select $\gamma$ from a fixed grid and $\lambda_J$ from a matched regularization grid using validation FID, and when possible we also report a \emph{matched-smoothness} comparison to isolate the smoothness/diversity trade-off.
}

\begin{table}[htbp]
\centering

\caption{Shared-time CIFAR-10 comparison under the robust-target latent protocol and matched computational budgets. All methods use the same latent autoencoder, the same flow backbone, the same number of training updates, and the same ODE solver budget.}
\label{tab:cifar_main}
\begin{tabular}{lcccccc}
\toprule
Method & FID$\downarrow$ & Recall$\uparrow$ & Coverage$\uparrow$ & MMD$_{\mathrm{in}}^2\downarrow$ & MMD$_{\mathrm{out}}^2\downarrow$ & Smoothness$\downarrow$ \\
\midrule
FM (shared-$t$) & 59.063 & 0.388 & 0.915 & 0.000609 & 0.012659 & 9.136 \\
$\gamma$-FM ($\gamma=0.2$) & 59.761 & 0.390 & 0.911 & 0.000631 & 0.012531 & 9.156 \\
$\gamma$-FM ($\gamma=0.5$) & 60.609 & 0.382 & 0.916 & 0.000685 & 0.012046 & 9.218 \\
$\gamma$-FM ($\gamma=1.0$) & 61.198 & 0.395 & 0.915 & 0.000773 & 0.011684 & 8.788 \\
$\gamma$-FM ($\gamma=2.0$) & 62.178 & 0.387 & 0.908 & 0.001060 & 0.011118 & 7.046 \\
FM$+$Jac & 60.305 & 0.397 & 0.924 & 0.000598 & 0.012915 & 9.709 \\
\bottomrule
\end{tabular}
\end{table}

\paragraph{Observed trade-off under the shared-time surrogate.}
Table~\ref{tab:cifar_main} shows the trade-off induced by density weighting under the shared-time protocol in the robust-target latent setting.
As $\gamma$ increases, the out-of-support discrepancy decreases monotonically, and the smoothness proxy improves overall, indicating progressively stronger suppression of unstable low-density updates.
At the same time, FID with respect to the inlier-class reference set tends to worsen and MMD$_{\mathrm{in}}^2$ tends to increase, showing that stronger weighting is not free in terms of inlier-oriented fidelity.
In the present robust-target latent CIFAR-10 study, $\gamma=0.5$ provides a reasonable compromise between low-density suppression and decoded-image quality.

A complementary clean-target rerun helps interpret this result.
In that setting, the weighting mechanism remained active, but the empirical advantage largely disappeared:
shared-time FM was already competitive, and $\gamma$-FM became mostly comparable or slightly inferior in FID, Recall, and Coverage, while still tending to produce somewhat smoother vector fields.
We therefore interpret the main gain of $\gamma$-weighting not as a universal improvement on all latent-generation tasks, but as a geometry-dependent benefit that becomes most visible when contaminated or low-support targets distort the regression problem.

The FM$+$Jac baseline is informative because it represents explicit smoothing under the same training budget. Empirically, FM$+$Jac improves Recall and Coverage, but it does not reproduce the same reduction in MMD$_{\mathrm{out}}^2$ achieved by larger values of $\gamma$. This supports our interpretation that $\gamma$-weighting is not merely a generic smoothness penalty, but a change in regression geometry that selectively deemphasizes low-density regions.

\paragraph{Comparison with explicit Jacobian regularization.}
{The FM$+$Jac baseline is informative but exhibits a different profile from $\gamma$-FM. In the shared-time runs, explicit Jacobian penalization could improve in-support fit and in some cases slightly helped Recall or Coverage, but it did not reproduce the systematic reduction in out-of-support discrepancy obtained by density weighting. Its effect on smoothness and FID was also less consistent, and the seed-to-seed variability was generally larger. This suggests that Jacobian regularization acts mainly as a generic local smoother, whereas $\gamma$-weighting changes the regression geometry by selectively suppressing low-density updates. We therefore regard FM$+$Jac as a useful comparison baseline, but not as a replacement for the void-rejection mechanism induced by the density-weighted objective.}

\paragraph{Adaptive selection via GSC and validation metrics.}
{To choose $\gamma$ without relying only on visual inspection, we continue to examine the Geometric Selection Criterion (GSC) from Section~\ref{sec:sic_theory}, but we interpret it as a practical heuristic rather than as a definitive optimality principle. In practice, we read the GSC curve together with validation FID and decoded-image diagnostics, so that the choice of $\gamma$ is supported by both latent-space and decoded-image summaries. 
In the present robust-target CIFAR-10 latent study, this combined view suggests $\gamma=0.5$ as a reasonable operating point, while $\gamma=1.0$ and $2.0$ are most useful for illustrating increasingly conservative low-density behavior.

\paragraph{Smoothness--diversity trade-off under the shared-time surrogate.}
Figure~\ref{fig:sharedt_tradeoff} complements Table~\ref{tab:cifar_main} by visualizing the same trade-off more directly.
As $\gamma$ increases, the model becomes more conservative in low-density regions under the robust-target protocol, which improves the out-of-support discrepancy and the smoothness proxy, while increasing the in-support discrepancy.
In this sense, the shared-time protocol reveals a coherent trade-off rather than a uniformly dominant improvement.
Among the tested values, $\gamma=0.5$ provides a reasonable operating point in our robust-target latent CIFAR-10 experiments, whereas $\gamma=1.0$ and especially $\gamma=2.0$ make the same tendency more pronounced.

Viewed together with the clean-target sanity check, this figure also clarifies the regime in which the weighting is most useful.
When the training targets are already clean, the same weighting mechanism still operates, but its effect is expressed mainly as additional regularization rather than as a clear gain in decoded-image quality.
This supports the interpretation that the proposed method is particularly useful when the regression geometry is perturbed by contamination or by low-support target regions.}

\begin{figure}[t]
    \centering
    \includegraphics[width=0.8\textwidth]{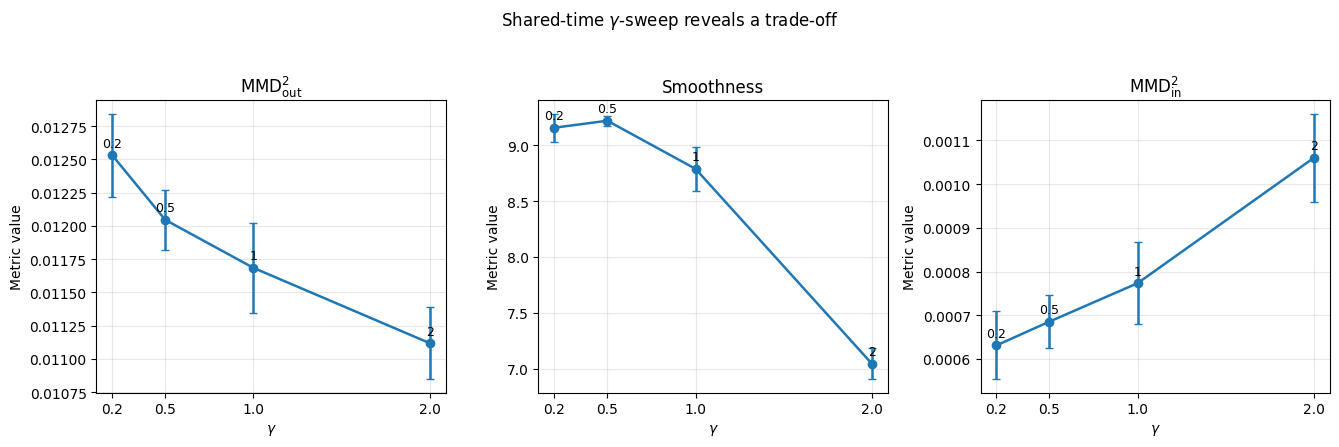}
  \caption{
Shared-time $\gamma$-sweep on latent CIFAR-10 under the robust-target protocol and the rank-based piecewise surrogate.
Left: $\mathrm{MMD}_{\mathrm{out}}^2$ decreases as $\gamma$ increases, indicating improved suppression of out-of-support trajectories.
Middle: the smoothness proxy improves overall as $\gamma$ becomes larger, consistent with a more conservative vector field in low-density regions.
Right: $\mathrm{MMD}_{\mathrm{in}}^2$ increases with $\gamma$, showing the accompanying cost in in-support fidelity.
Together, these panels visualize the trade-off reported in Table~\ref{tab:cifar_main}.
A separate clean-target rerun showed that the weighting mechanism remained active there as well, but the gain in decoded-image quality largely disappeared, so the present robust-target regime is the one in which the advantage of density weighting is most clearly expressed.
}
 \label{fig:sharedt_tradeoff}
\end{figure}

Thus, the GSC may provide a useful geometric diagnostic for increasingly conservative low-density regularization, even though in the present experiments it is not used as a stand-alone selection rule. Future work may explore scalable approximations of the rigorous penalty, such as diagonal approximations or last-layer Laplace approximations, to bridge the gap between the rigorous theory and deep learning practice.

\begin{figure}[h]
    \centering
    \begin{subfigure}{0.48\linewidth}
        \centering
        \includegraphics[width=\linewidth]{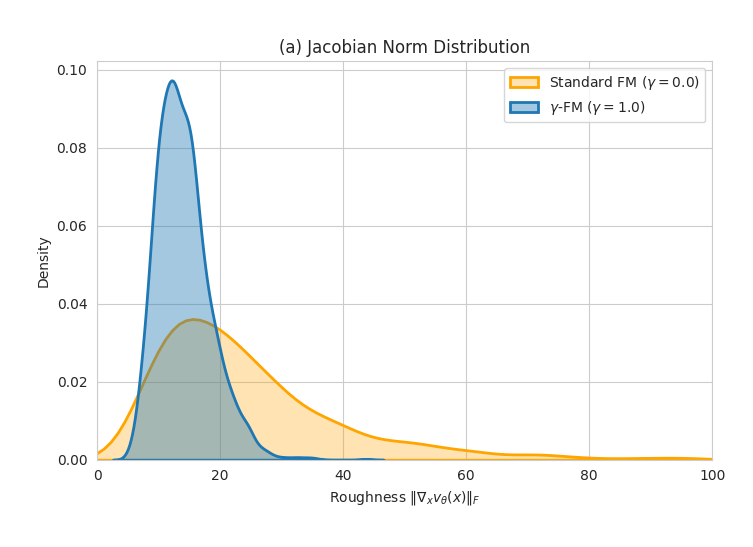}
        \caption{Jacobian norm distribution}
        \label{fig:smoothness_hist}
    \end{subfigure}
    \hfill
    \begin{subfigure}{0.48\linewidth}
        \centering
        \includegraphics[width=\linewidth]{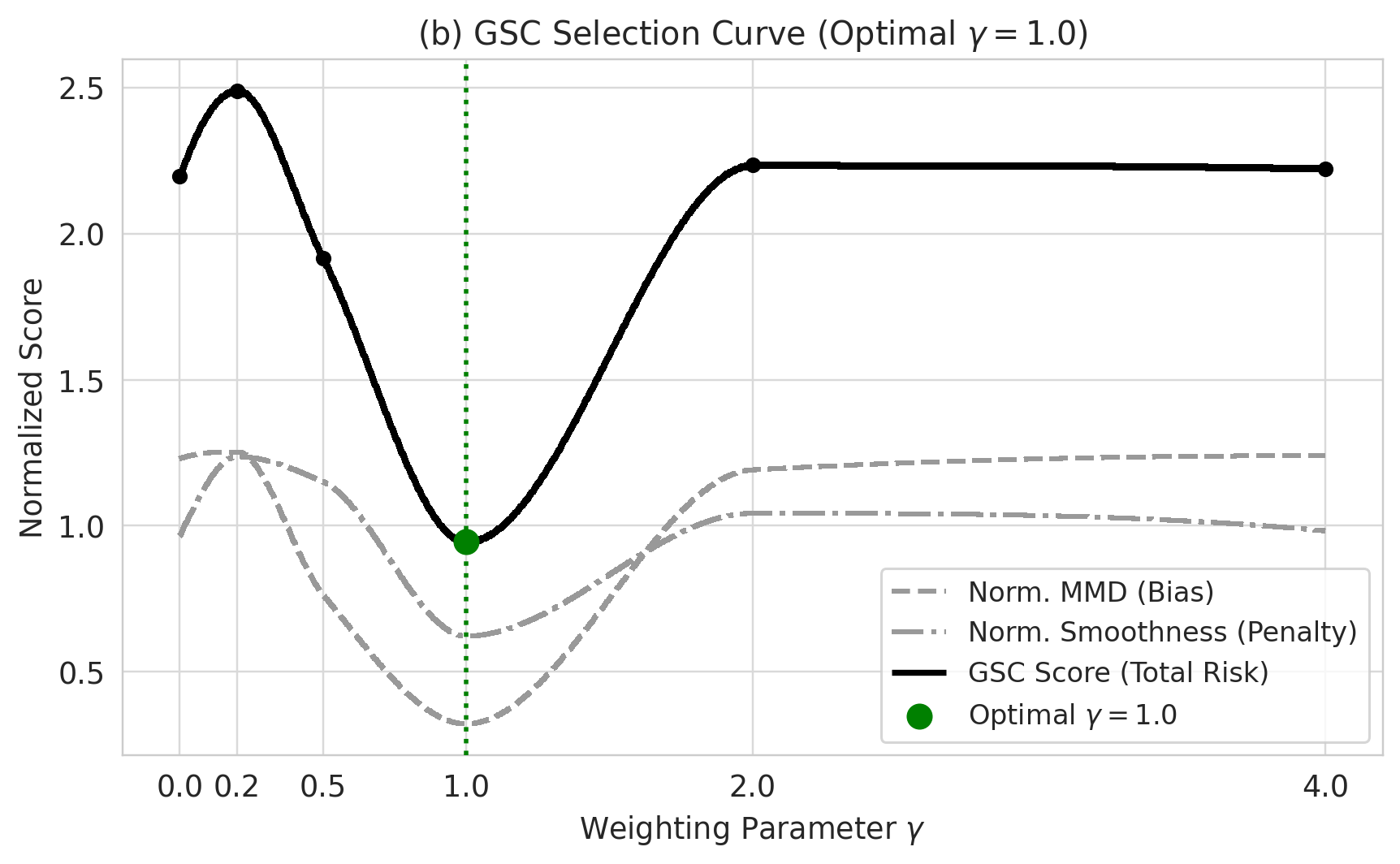}
              \caption{GSC selection curve}
        \label{fig:sic_curve}
    \end{subfigure}
\caption{{Auxiliary diagnostics for smoothness and $\gamma$ selection.}
    (a) \textbf{Micro-level analysis:} Illustrative roughness histogram for the baseline ($\gamma=0$) and our method ($\gamma=1.0$). The weighted objective is associated with a milder roughness profile and fewer extreme oscillatory regions, although the two distributions still overlap substantially.
    (b) \textbf{Macro-level selection:} The GSC score across different $\gamma$ values. In the shared-time protocol, this curve is interpreted jointly with validation FID and decoded-image diagnostics rather than as a stand-alone optimum certificate.}
\end{figure}

\paragraph{Practical stability of the shared-time proxy.}
{
A natural concern is whether the shared-time $k$-NN weighting proxy becomes unreliable when the minibatch is small.
To address this point directly, we report an additional minibatch-size ablation in Appendix~D.1, where the latent CIFAR-10 experiment is repeated under the same frozen autoencoder, decoding pipeline, optimizer, and training budget while varying only the minibatch size.
The main finding is that, in the tested latent-CIFAR regime, the proxy retains a stable nontrivial dynamic range across the tested batch sizes rather than collapsing at smaller minibatches.
At the same time, both FM and $\gamma$-FM improve as the batch size increases, indicating that the degradation at small $B$ is not specific to the density-weighted objective.
In particular, Appendix~D.1 shows that the final ESS$/B$ stays near $0.52$ across the tested batch sizes, indicating that the proxy retains a stable nontrivial dynamic range rather than degenerating into nearly uniform or highly singular weights.
We therefore interpret the shared-time surrogate as a practically stable within-batch ordering device in the regime studied here, rather than as a general guarantee for high-dimensional minibatch density estimation.
Moreover, the same ablation shows that at low to moderate batch sizes, $\gamma$-FM tends to produce smoother learned vector fields than FM while maintaining broadly comparable recall and coverage.
}

\subsection{Stress test under contaminated latents}
While the density-weighted objective is well defined under both clean and contaminated targets, the clean-target rerun suggests that its empirical benefit is most visible when the regression geometry is distorted by out-of-support or contaminated latents.
We therefore retain a more adversarial latent-contamination stress test to examine that regime directly.

These latents are sampled from a broader Gaussian distribution that lies away from the main data manifold.
Figure~\ref{fig:robustness_vis} visualizes the generated samples projected onto the first two principal components.
Standard Flow Matching ($\gamma=0$, Left) is more easily distracted by these contaminants: the learned flow attempts to fit both inlier and adversarial target signals, resulting in a distorted latent structure.
In contrast, $\gamma$-Flow Matching ($\gamma=0.5$, Right) downweights the contaminants via $\hat p_t^\gamma$, thereby preserving closer alignment with the dominant inlier manifold.
This stress test therefore illustrates the regime in which the proposed weighting is most helpful: not necessarily already-clean targets, but target geometries that are distorted by contamination or poor support.

\begin{figure}[t]
    \centering
    \includegraphics[width=.75\linewidth]{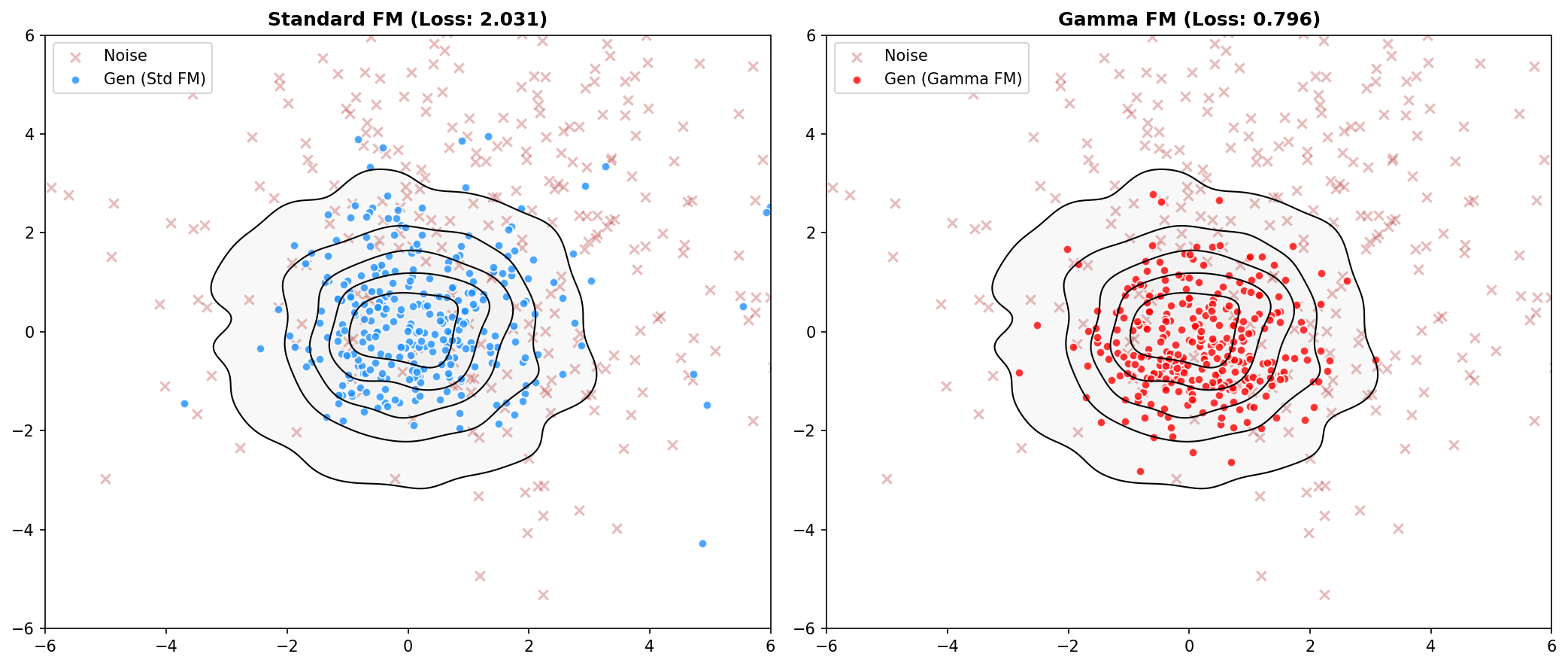}
\caption{%
Stress test under a more adversarial latent-contamination pattern.
Left: Standard FM ($\gamma=0$) is more easily misled by adversarial latents and learns a distorted manifold.
Right: $\gamma$-FM ($\gamma=0.5$) downweights the contaminants via $\hat p_t^\gamma$ and better preserves the dominant inlier structure.
This figure is intended to highlight the regime in which density weighting is most useful.
In a separate clean-target rerun, where the target geometry was not distorted by contamination, the same weighting mechanism remained active but did not yield a comparably clear gain in decoded-image quality.
}
    \label{fig:robustness_vis}
\end{figure}


Finally, we examine the relationship between vector field smoothness and the number of function evaluations (NFE) required by an ODE solver.
We consider a range of NFE values and measure the Fréchet  distance between generated images and real images in the latent space.
In our experiments, smoother learned vector fields were often associated with better quality at lower NFE. This should be interpreted as an empirical tendency in the tested setting rather than as a universal consequence of the method.
In contrast, with a rough vector field (standard FM), reducing the step size (increasing NFE) does not necessarily improve sample quality: the solver is approximating a poor flow that overfits the voids.
High NFE efficiency is often associated with flow rectification methods that enforce straight trajectories \citep{liu2023flow}. 
Our results suggest that $\gamma$-weighting achieves a similar efficiency gain by smoothing the vector field in void regions, effectively removing the stiff components of the dynamics.

\begin{figure}[t]
    \centering
    \includegraphics[width=0.5\linewidth]{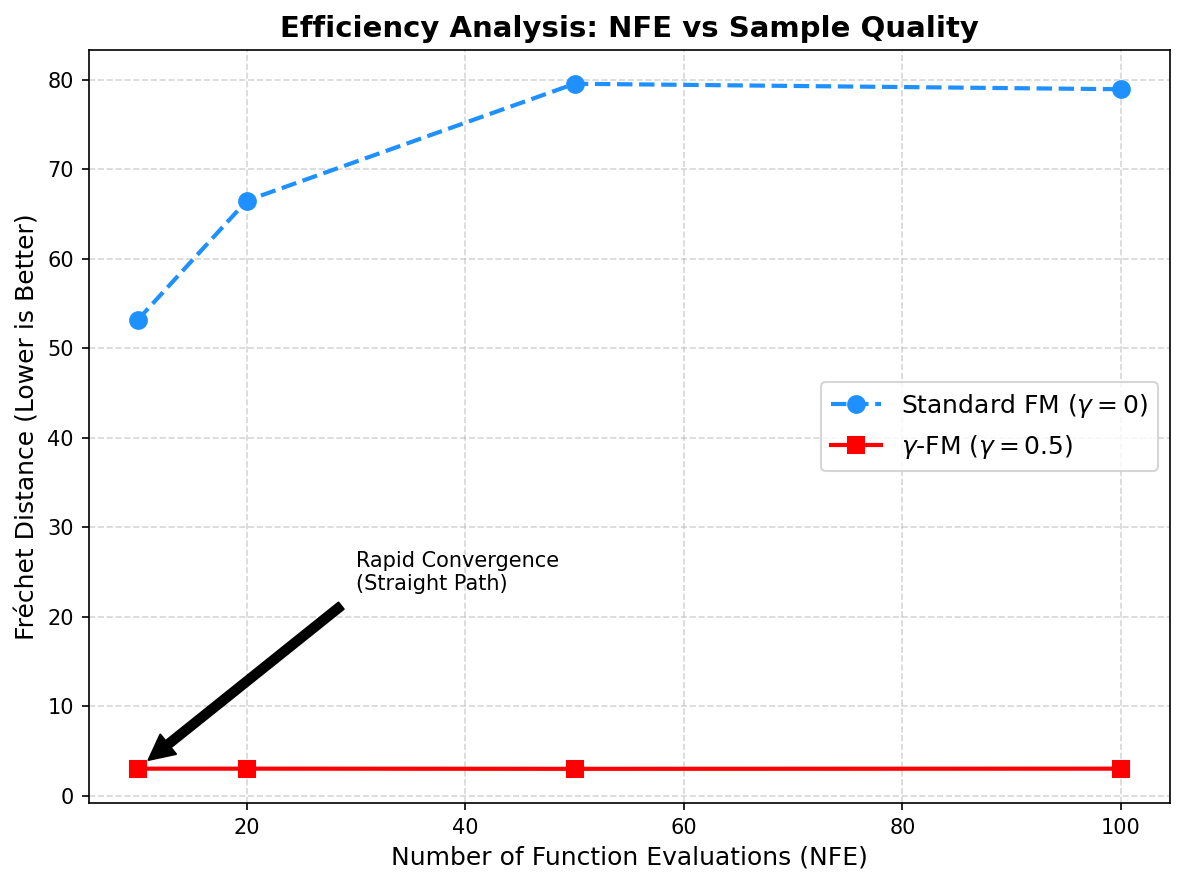}
    \caption{ {Efficiency Analysis (NFE vs. quality).} 
    Comparison of generation quality across different ODE solver steps in the tested latent-CIFAR setting. Red ($\gamma$-FM): Reaches near-optimal quality at relatively low NFE, consistent with a smoother and better-conditioned flow in this experiment. Blue (Standard FM): Shows diminishing returns as NFE increases, suggesting that finer ODE integration does not compensate for a less well-conditioned vector field--a tendency we refer to informally as an Inverse Precision Paradox. This behavior is an empirical observation in the present setting and should not be interpreted as a universal consequence of standard flow matching.
   }
    \label{fig:nfe_curve}
\end{figure}

\section{Conclusion}

{We introduced $\gamma$-Flow Matching as a density-weighted alternative to standard flow matching for high-dimensional generative flows. The presentation separates what is formally established from what is offered as geometric or spectral intuition: the weighted objective suppresses low-density variance contributions, the porous-medium analogy clarifies a ``void rejection'' mechanism, and the weighted Dirichlet viewpoint suggests a route toward smoother learned vector fields.}
{Empirically, the evaluation protocol is designed to test these claims under a shared-time density-estimation scheme, with explicit comparison to FM$+$Jac and with decoded-image metrics that quantify both fidelity and diversity.}

Looking forward, the implications of this geometric framework extend well beyond image synthesis.
One promising direction is the theoretical expansion into Optimal Transport, where the $\gamma$-weighted kinetic energy suggests a new class of transport costs that naturally penalize paths through low-density regions.
Furthermore, the principle of void rejection holds significant potential for Causal Flow Matching; by automatically downweighting regions with poor support (i.e., violations of the positivity assumption), $\gamma$-FM could enable more robust estimation of counterfactuals and interventional distributions in high-dimensional causal discovery.
We conclude that density-weighted regression offers a principled path for organizing learning on the data manifold, providing a robust foundation for next-generation generative and causal modeling.

{As with many efficiency-improving advances in generative modeling, the proposed method may support beneficial uses such as simulation and data augmentation, but it may also lower the cost of producing synthetic content at scale. The present paper is intended as a methodological contribution, and we do not claim application-specific safeguards beyond standard responsible use and evaluation in controlled research settings.}

\section*{Code availability}
{The Python notebooks used to reproduce the simulation studies  are available at \url{https://github.com/shinto-eguchi/gamma-flow-matching}.}

\section*{Acknowledgements}

The author is grateful to the Action Editor and the anonymous reviewers for their careful reading, constructive comments, and encouraging assessment of the work. Their feedback helped improve the clarity, positioning, and presentation of the paper. The author also thanks the TMLR editorial team for coordinating the review process.

\bibliographystyle{plainnat}
\bibliography{references}

\appendix
\section{\texorpdfstring{$\gamma$}{gamma}-Stein connections and geodesic viewpoint}\label{appendix-A}

We briefly sketch how the $\gamma$-divergence induces a Riemannian
structure and an affine connection which are naturally expressed in
terms of $\gamma$-Stein operators.
Let $\{q_\theta\}_{\theta\in\Theta}$ be a parametric family of densities
with score functions
\[
  s_i(x;\theta)
  =
  \partial_{\theta^i} \log q_\theta(x)\,.
\]
The $\gamma$-divergence $D_\gamma(p,q)$   generates
a Riemannian metric $g^{(\gamma)}$ and a pair of dual affine connections
$(\nabla^{(\gamma)},\nabla^{(\gamma)*})$ on the statistical manifold
$\mathcal M = \{q_\theta\}$ according to the theory of minimum
contrast geometry \citep{Eguchi1992,eguchi2009information}.
Ignoring an irrelevant constant factor, the induced metric takes the
form
\[
  g^{(\gamma)}_{ij}(\theta)
  =
  \int s_i(x;\theta)\,s_j(x;\theta)\,
       q_\theta(x)^{1+\gamma}\,dx\,,
\]
which coincides with the $\gamma$-weighted Fisher information.
Equivalently, if we define the escort measure
\[
  d\mu_{\theta,\gamma}(x)
  =
  \frac{q_\theta(x)^{1+\gamma}}{Z_\gamma(\theta)}\,dx\,,
\]
then
\[
  g^{(\gamma)}_{ij}(\theta)
  =
  Z_\gamma(\theta)\,
  \mathbb E_{\mu_{\theta,\gamma}}
  \bigl[s_i s_j\bigr]\,.
\]

For a tangent vector $\xi = \xi^i \partial_i \in T_\theta\mathcal M$,
we associate the score field
\[
  u_\xi(x;\theta)
  =
  \xi^i s_i(x;\theta)\,.
\]
The map $\xi \mapsto u_\xi$ embeds the tangent space into the
weighted Hilbert space
\[
  \mathcal H_{\theta,\gamma}
  =
  L^2\!\bigl(q_\theta^{1+\gamma}(x)\,dx\bigr)\,,
\]
and the metric can be written as
\[
  g^{(\gamma)}_\theta(\xi,\eta)
  =
  \langle u_\xi,u_\eta\rangle_{\theta,\gamma}
  =
  \int u_\xi(x;\theta)u_\eta(x;\theta)\,
       q_\theta(x)^{1+\gamma}dx\,.
\]

Following the general construction of \citet{Eguchi1992}, the affine
connection $\nabla^{(\gamma)}$ can be realized as the $L^2$-projection
of the $\theta$-derivative of score fields back onto the score span.
More precisely, for each coordinate direction $\partial_k$ we consider
$\partial_k s_i = \partial_{\theta^k}\partial_{\theta^i}\log q_\theta$
as an element of $\mathcal H_{\theta,\gamma}$ and define the projection
\[
  \Pi_{\theta,\gamma}\bigl[\partial_k s_i\bigr]
  =
  \Gamma^{(\gamma)\,j}_{ik}(\theta)s_j(\cdot;\theta)\,,
\]
where the connection coefficients $\Gamma^{(\gamma)\,j}_{ik}$ are
determined by the orthogonality relation
\[
  \Bigl\langle
     \partial_k s_i - \Gamma^{(\gamma)\,j}_{ik}s_j,\;
     s_\ell
  \Bigr\rangle_{\theta,\gamma}
  = 0\,,
  \qquad
  \forall\,\ell\,.
\]
This yields the explicit formula
\[
  g^{(\gamma)}_{j\ell}(\theta)\,
  \Gamma^{(\gamma)\,j}_{ik}(\theta)
  =
  \int \partial_k s_i(x;\theta)\,s_\ell(x;\theta)\,
       q_\theta(x)^{1+\gamma}\,dx\,,
\]
or equivalently,
\[
  \Gamma^{(\gamma)\,j}_{ik}(\theta)
  =
  g^{(\gamma)\,j\ell}(\theta)\,
  \int \partial_k s_i(x;\theta)\,s_\ell(x;\theta)\,
       q_\theta(x)^{1+\gamma}\,dx\,.
\]

The $\gamma$-Stein operator
\[
  \mathcal A_{q_\theta}^{(\gamma)} f
  =
  q_\theta^{-1}\,
  \nabla\!\cdot\!\bigl(q_\theta^{\gamma+1} f\bigr)
\]
encodes a weighted divergence with respect to the escort measure
$\mu_{\theta,\gamma}$. The Stein identity
$\mathbb E_{q_\theta}[\mathcal A_{q_\theta}^{(\gamma)} f] = 0$
can be interpreted as an orthogonality condition in
$\mathcal H_{\theta,\gamma}$, and integration by parts allows one
to rewrite the right-hand side of the above expression for
$\Gamma^{(\gamma)}$ in terms of $\gamma$-Steinized moments of the
score fields.
Thus the connection $\nabla^{(\gamma)}$ may be viewed as a
$\gamma$-Stein connection associated with the escort family
$\{\mu_{\theta,\gamma}\}$.

Finally, a smooth curve $\theta(t)$ is a $\nabla^{(\gamma)}$-geodesic
if and only if it satisfies
\[
  \ddot\theta^i(t)
  +
  \Gamma^{(\gamma)\,i}_{jk}(\theta(t))
  \dot\theta^j(t)\dot\theta^k(t)
  = 0\,.
\]
In terms of score fields, this condition is equivalent to requiring
that the associated field
\[
  u_t(x)
  =
  \dot\theta^i(t)s_i(x;\theta(t))
\]
evolves in $\mathcal H_{\theta,\gamma}$ according to
\[
  \Pi_{\theta(t),\gamma}\bigl[\partial_t u_t\bigr] = 0\,,
\]
that is, $u_t$ is covariantly constant under the $\gamma$-Stein
connection along the curve.
This provides a geometric counterpart to the sample-space flow
governed by the nonlinear Fokker--Planck equation, and suggests a
dual picture in which $\gamma$-flow matching approximately follows
geodesics on the statistical manifold endowed with the
$\gamma$-Fisher metric and the $\gamma$-Stein connection.
A more detailed study of this escort geometry is left for future work;
see, e.g., \citet{Ohara2010,Matsuzoe2017} for related developments on
escort distributions.


\section{Hyperparameter and proxy sensitivity}
\label{app:efficiency}

A potential concern with density-weighted regression is the computational overhead introduced by the $k$-Nearest Neighbors ($k$-NN) estimation within each training batch.
To address this, we first evaluate the computational cost and stability of the practical $k$-NN surrogate used in our implementation.
We then examine how the surrogate behaves as the minibatch size $B$ and the ambient dimension $D$ vary under the shared-time protocol.

The purpose of this appendix is operational rather than asymptotic: we do not claim a universal high-dimensional guarantee for $k$-NN density estimation.
Instead, we document how the within-batch geometry and the induced practical weights behave in the regime relevant to our experiments.

\section{Computational cost of the $k$-NN surrogate}

We first measure the wall-clock time per training iteration while varying the number of neighbors $k \in \{5,10,20,50,100\}$.
All experiments use batch size $B=512$ and are executed on a single NVIDIA T4 GPU.
The reported values are averaged over 1000 iterations.

\textbf{Results.}
Table~\ref{tab:ablation_k} shows that the computational cost is effectively invariant to the choice of $k$.
The average time per iteration remains approximately $2.0$--$2.5$ ms across all tested values.
This occurs because the dominant cost arises from the pairwise distance computation ($O(B^2)$), which is highly parallelized on modern GPUs.
The additional top-$k$ selection step introduces negligible overhead for typical batch sizes.
The loss values also show that training stability is insensitive to the precise choice of $k$.
Therefore, the practical surrogate adds minimal computational burden compared with standard Flow Matching.

\begin{table}[h]
\centering
\caption{Ablation study on neighbor size $k$. Time per iteration remains nearly constant.}
\label{tab:ablation_k}
\begin{tabular}{cccccc}
\toprule
$k$ & 5 & 10 & 20 & 50 & 100 \\
\midrule
Time (ms/iter) & 2.1 & 2.1 & 2.2 & 2.1 & 2.1 \\
Avg Loss & 1.62 & 1.60 & 1.66 & 1.57 & 1.65 \\
\bottomrule
\end{tabular}
\end{table}

\subsection{ Sensitivity to batch size and ambient dimension}

To better understand the geometric behavior of the practical surrogate, we conducted a diagnostic experiment varying both the minibatch size $B$ and the effective ambient dimension $D$ under the shared-time protocol.

The analysis focuses on three quantities:

\begin{itemize}
\item the $k$-NN distance statistic $d_k(x)$,
\item the inverse-distance proxy $\rho_k(x)=1/(d_k(x)+\varepsilon)$,
\item the induced practical weights $w(x)$ used in the $\gamma$-weighted regression objective.
\end{itemize}

\textbf{Dimension sensitivity.}
At fixed batch size $B=512$, the raw $k$-NN geometry becomes increasingly concentrated as the ambient dimension increases.
The coefficient of variation of the mean-$k$ distance decreases from $\mathrm{CV}(d_k)=0.087$ at $D=32$ to $0.044$ at $D=256$.
Similarly, the contrast ratio $q_{0.8}(d_k)/q_{0.2}(d_k)$ decreases from $1.158$ to $1.078$.
The same trend appears for the proxy $\rho_k$, where $\mathrm{CV}(\rho_k)$ decreases from $0.086$ to $0.045$.
These results are consistent with the well-known concentration of pairwise distances in higher dimensions.

\textbf{Batch-size sensitivity.}
At fixed dimension $D=256$, increasing the minibatch size from $B=128$ to $B=1024$ slightly strengthens the within-batch ordering signal.
The coefficient of variation $\mathrm{CV}(d_k)$ increases from $0.041$ to $0.048$, and the contrast ratio $q_{0.8}(d_k)/q_{0.2}(d_k)$ increases from $1.071$ to $1.085$.
Thus, larger batches provide a modestly sharper local ordering signal.

\textbf{Behavior of the practical weights.}
Importantly, the final weights are obtained through a rank-based monotone transform of the proxy rather than from its absolute scale.
Consequently, summary statistics of the induced weights, such as $\mathrm{ESS}(w)/B$, remain nearly invariant to the ambient dimension.
This behavior is by design: the practical surrogate is intended to provide a stable within-batch ordering signal rather than to estimate densities on an absolute scale.
In practice, the surrogate preserves sufficient ordering information even though the raw $k$-NN geometry becomes less contrasted in higher dimensions.

\begin{table}[t]
\centering
\caption{Sensitivity of the $k$-NN geometry and the induced practical weights to batch size $B$ and ambient dimension $D$.}
\label{tab:appendix_knn_sensitivity}
\begin{tabular}{ccccccc}
\hline
Setting & $B$ & $D$ &
$\mathrm{CV}(d_k)$ &
$q_{0.8}(d_k)/q_{0.2}(d_k)$ &
$\mathrm{CV}(\rho_k)$ &
$\mathrm{ESS}(w)/B$ \\
\hline
\multicolumn{7}{c}{Fixed $D=256$ (varying $B$)}\\
\hline
 & 128  & 256 & 0.0407 & 1.071 & 0.0412 & 0.603 \\
 & 256  & 256 & 0.0414 & 1.073 & 0.0421 & 0.603 \\
 & 512  & 256 & 0.0443 & 1.078 & 0.0454 & 0.603 \\
 & 1024 & 256 & 0.0481 & 1.085 & 0.0499 & 0.603 \\
\hline
\multicolumn{7}{c}{Fixed $B=512$ (varying $D$)}\\
\hline
 & 512 & 32  & 0.0873 & 1.158 & 0.0864 & 0.603 \\
 & 512 & 64  & 0.0668 & 1.119 & 0.0671 & 0.603 \\
 & 512 & 128 & 0.0515 & 1.091 & 0.0522 & 0.603 \\
 & 512 & 256 & 0.0445 & 1.078 & 0.0455 & 0.603 \\
\hline
\end{tabular}
\end{table}

Figure~\ref{fig:appendix_knn_sensitivity} visualizes these trends.
The results show that although the raw geometric contrast weakens as dimension increases, the rank-based practical surrogate remains operationally stable for the range of $B$ and $D$ used in our experiments.

\begin{figure}[t]
\centering
\includegraphics[width=0.9\linewidth]{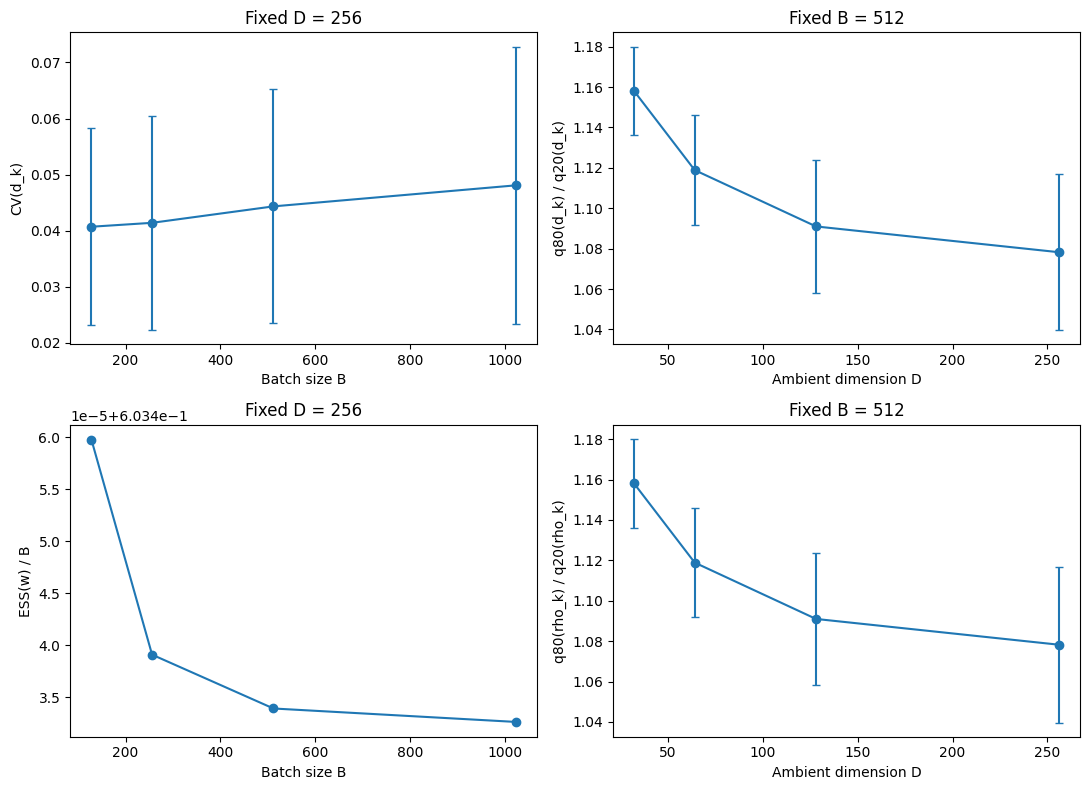}
\caption{
Operational sensitivity of the practical $k$-NN surrogate.
Top: behavior of the raw $k$-NN distance statistic.
Bottom: behavior of the induced practical weights and proxy.
}
\label{fig:appendix_knn_sensitivity}
\end{figure}

\section{Latent-flow modelling of CIFAR-10}
{
\paragraph{Implementation details.}
The latent CIFAR-10 experiments use a robust-target setup with $n_{\mathrm{train}}=10000$ images and contamination fraction $0.40$, consisting of $6000$ inlier-class images and $4000$ outlier-class images. All latent-space experiments use the same frozen autoencoder and the same decoding pipeline across FM, shared-time $\gamma$-FM, and FM$+$Jac. The autoencoder is a pre-trained EMA checkpoint of a \texttt{CleanAE} architecture with base width $64$ and latent dimension $d_z=256$.

The latent flow backbone is a \texttt{VelocityMLP} with hidden width $512$ and depth $3$, trained for $60$ epochs with batch size $512$ using Adam with learning rate $10^{-3}$ and EMA decay $0.999$. In the shared-time protocol, each minibatch draws a single $t\sim \mathcal U[0,1]$ and constructs all particles at that common time. The practical weighting uses a rank-based monotone $k$-NN surrogate with $k=10$, lower and upper rank quantiles $0.20$ and $0.80$, rank scale $2.0$, and weight clipping to $[10^{-3}, 5\times 10^{1}]$.

For FM$+$Jac, the penalty coefficient is $\lambda_J=10^{-3}$, and the Jacobian norm is estimated by a single Hutchinson probe with Rademacher noise during both training and evaluation. Sampling uses a fixed-step midpoint integrator with $100$ steps. Decoding uses retrieval-skip with a skip bank of $600$ encoded training images, $k=2$, and $\tau=0.5$. Latent MMD is computed with subsample size $2000$, and smoothness is evaluated with $2000$ random latent samples and one Rademacher probe. FID, Recall, and Coverage are computed from $5000$ generated images, with metric batch size $128$ and coverage parameter $k=5$; the reference set consists of the $1000$ test images of the inlier class. Multi-seed runs use seeds $42$, $1042$, and $2042$ on a single NVIDIA Tesla T4 GPU.}

This appendix provides additional visual results for the latent-flow CIFAR-10 experiments in Section \ref{cifar-10}.
{ We make the experimental protocol explicit by reporting the autoencoder architecture, latent dimensionality, flow backbone, shared-time minibatch construction, practical $k$-NN surrogate parameters, Jacobian-penalty implementation, fixed-step sampling protocol, decoding protocol, the number of generated samples used for FID/Recall/Coverage, random seeds, and hardware.}

Subsequently, we trained the flow matching models within this fixed latent space.
Figure~\ref{fig:latent_flow_samples} shows representative decoded samples obtained from FM, $\gamma$-FM (shared-$t$), and FM$+$Jac.
The decoding process uses retrieval-skip with parameters $k=2$ and $\tau=0.5$.
{ In the comparison, the same frozen decoder and the same retrieval-skip settings are used for FM, $\gamma$-FM (shared-$t$), and FM$+$Jac so that image-space differences reflect changes in the latent-flow objective rather than changes in the representation or decoding pipeline.}
These samples are intended as a qualitative complement to the quantitative metrics, showing that the overall decoded sample quality remains reasonable across the compared methods under a fixed latent representation.

{

\subsection{Minibatch-size sensitivity of the shared-time weighting proxy}
\label{app:batchsize}

A natural concern is whether the shared-time $k$-NN proxy used in $\gamma$-FM becomes unreliable when the minibatch is small.
To examine this point, we repeated the latent CIFAR-10 experiment under the same latent representation, frozen autoencoder, decoding pipeline, optimizer, and training budget as in the main experiment, varying only the minibatch size
\[
B \in \{128,256,512,1024\}.
\]
We compared shared-time FM and shared-time $\gamma$-FM with $\gamma=1$, and report means and standard deviations over three random seeds in Table~\ref{tab:batchsize_ablation}.

The main observation is that the weighting proxy did not collapse at smaller batch sizes in this regime.
For $\gamma$-FM, the final effective sample size ratio $\mathrm{ESS}/B$ remained between $0.512$ and $0.526$ across all tested values of $B$, while the final weight quantile ratio $q_{0.95}/q_{0.05}$ stayed near $20.09$ throughout.
Thus, the proxy retained a stable, nontrivial dynamic range rather than degenerating as $B$ decreased.

As expected, image-space quality improved with minibatch size for both FM and $\gamma$-FM.
Hence, the degradation at small $B$ should not be interpreted as a pathology specific to density weighting; rather, it reflects the general difficulty of minibatch training in this contaminated latent-generation setting.
Within this shared trend, the most consistent advantage of $\gamma$-FM appears in the geometry of the learned field:
at low to moderate batch sizes ($B=128,256,512$), the smoothness proxy is lower for $\gamma$-FM than for FM, while recall and coverage remain broadly comparable.
This is consistent with the intended role of the weighting scheme as an implicit geometric regularizer.

As a complementary sanity check, we also repeated the same minibatch-size experiment under a clean-target latent protocol, that is, without outlier contamination in the training targets.
In this rerun, the weighting mechanism remained clearly active, with final $\mathrm{ESS}/B$ near $0.52$ and the final weight quantile ratio near $20$ across batch sizes.
However, the decoded-image advantage largely disappeared.
At $B=1024$, for example, shared-time FM achieved slightly better FID, Recall, and Coverage than shared-time $\gamma$-FM, while the smoothness values became nearly identical.
This clean-target rerun is therefore included mainly as a sanity check on the mechanism rather than as a regime in which $\gamma$-FM is expected to dominate FM.

\begin{figure}[!htbp]
  \centering

  \begin{subfigure}[t]{0.48\linewidth}
    \centering
    \includegraphics[width=\linewidth]{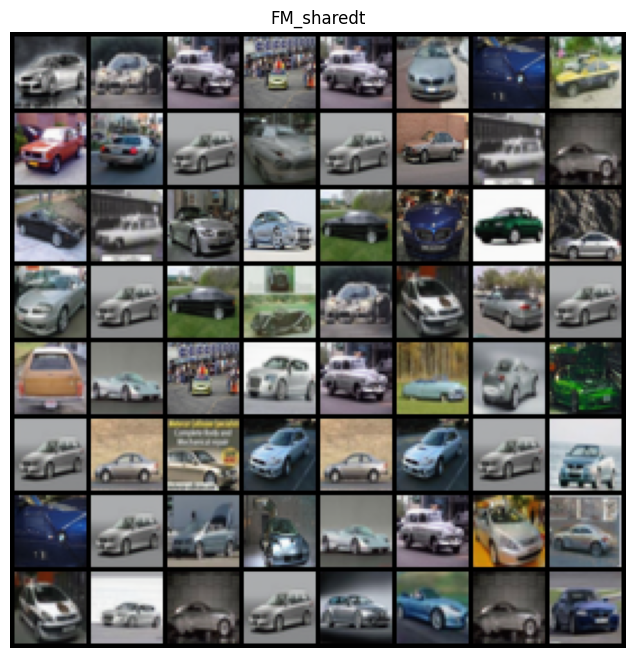}
    \caption{FM (shared-$t$) samples (retrieval-skip: $k=2$, $\tau=0.5$).}
    \label{fig:fm_sharedt_samples}
  \end{subfigure}\hfill
  \begin{subfigure}[t]{0.48\linewidth}
    \centering
    \includegraphics[width=\linewidth]{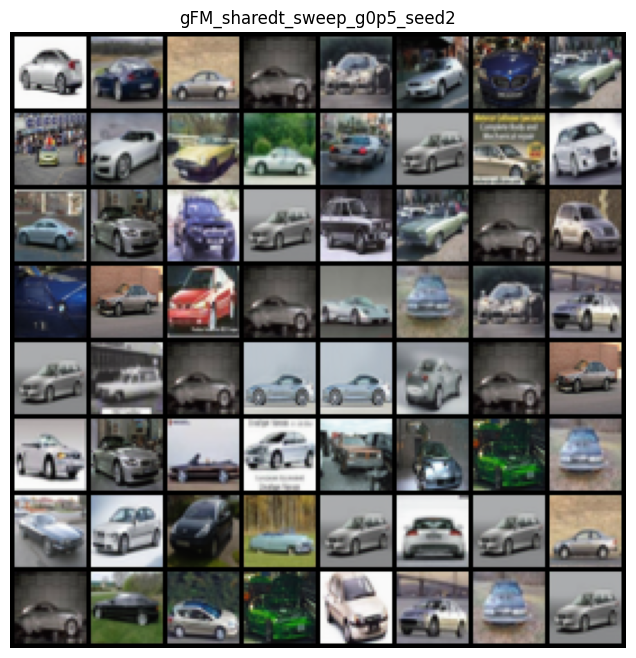}
    \caption{$\gamma$-FM with $\gamma=0.5$ (retrieval-skip: $k=2$, $\tau=0.5$).}
    \label{fig:gamma05_samples}
  \end{subfigure}

  \vspace{0.6em}

  \begin{subfigure}[t]{0.48\linewidth}
    \centering
    \includegraphics[width=\linewidth]{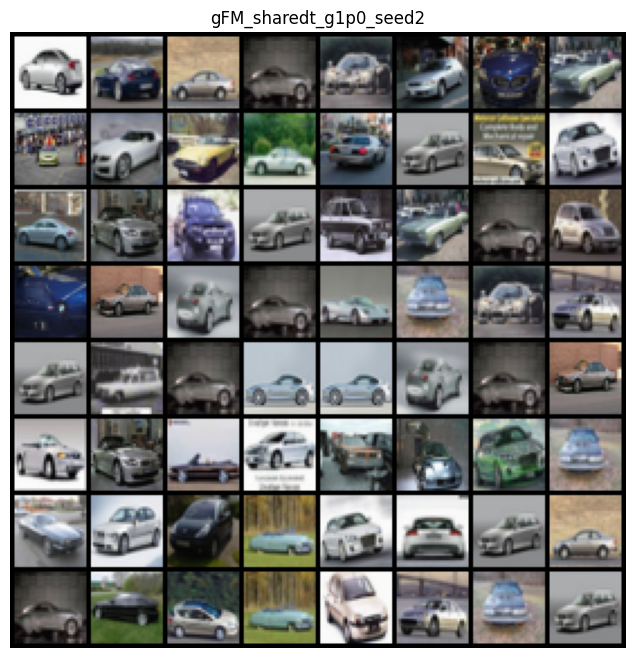}
    \caption{$\gamma$-FM with $\gamma=1.0$ (retrieval-skip: $k=2$, $\tau=0.5$).}
    \label{fig:gamma10_samples}
  \end{subfigure}\hfill
  \begin{subfigure}[t]{0.48\linewidth}
    \centering
    \includegraphics[width=\linewidth]{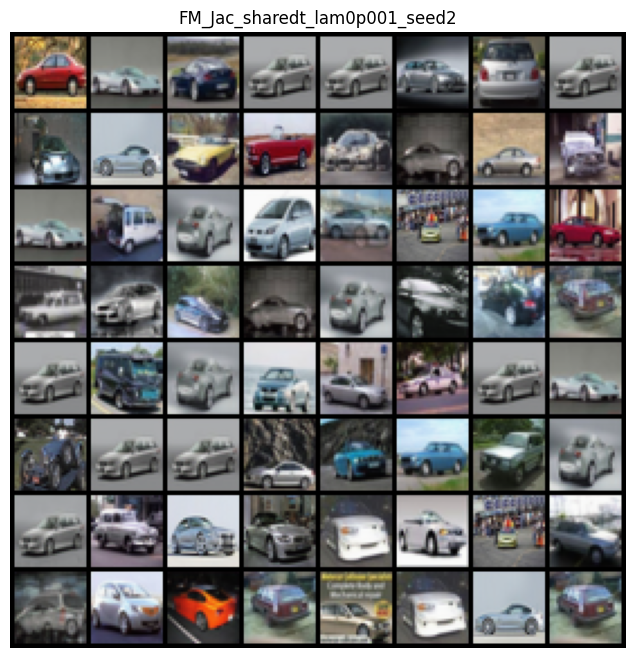}
    \caption{FM$+$Jac with $\lambda_J=10^{-3}$ (retrieval-skip: $k=2$, $\tau=0.5$).}
    \label{fig:fmjac_samples}
  \end{subfigure}
\caption{
Representative decoded samples for FM (shared-$t$), $\gamma$-FM ($\gamma=0.5,1.0$), and FM$+$Jac under the latent CIFAR-10 protocol.
All methods use the same frozen decoder and the same retrieval-skip settings ($k=2$, $\tau=0.5$).
This figure is intended as a qualitative complement to the quantitative comparisons in Table~\ref{tab:cifar_main}.
}
  \label{fig:latent_flow_samples}
\end{figure}

\begin{table}[!htbp]
\centering
\small
\caption{
Minibatch-size ablation for shared-time FM and shared-time $\gamma$-FM ($\gamma=1$) on latent CIFAR-10 under the robust-target protocol.
Each entry is the mean $\pm$ standard deviation over three random seeds.
Lower FID and smoothness are better, while higher Recall and Coverage are better.
For FM, $\mathrm{ESS}/B=1$ by construction because the weights are uniform.
}
\label{tab:batchsize_ablation}
\begin{tabular}{llcccccc}
\toprule
Method & $B$ & FID$\downarrow$ & Recall$\uparrow$ & Coverage$\uparrow$ & Smoothness$\downarrow$ & $\mathrm{ESS}/B$ & $q_{0.95}/q_{0.05}$ \\
\midrule
FM (shared-$t$) & 128  & $64.321 \pm 0.757$ & $0.370 \pm 0.012$ & $0.893 \pm 0.012$ & $64.813 \pm 3.335$ & $1.000$ & $1.00$ \\
FM (shared-$t$) & 256  & $62.407 \pm 2.303$ & $0.402 \pm 0.012$ & $0.907 \pm 0.005$ & $33.768 \pm 4.525$ & $1.000$ & $1.00$ \\
FM (shared-$t$) & 512  & $59.063 \pm 0.759$ & $0.388 \pm 0.008$ & $0.915 \pm 0.006$ & $9.136 \pm 0.110$  & $1.000$ & $1.00$ \\
FM (shared-$t$) & 1024 & $53.054 \pm 0.965$ & $0.406 \pm 0.020$ & $0.938 \pm 0.004$ & $1.209 \pm 0.175$  & $1.000$ & $1.00$ \\
\midrule
$\gamma$-FM ($\gamma=1$) & 128  & $66.076 \pm 0.657$ & $0.372 \pm 0.010$ & $0.889 \pm 0.011$ & $50.795 \pm 2.236$ & $0.512$ & $20.09$ \\
$\gamma$-FM ($\gamma=1$) & 256  & $64.339 \pm 1.434$ & $0.384 \pm 0.017$ & $0.877 \pm 0.015$ & $27.697 \pm 2.384$ & $0.518$ & $20.09$ \\
$\gamma$-FM ($\gamma=1$) & 512  & $61.198 \pm 0.735$ & $0.395 \pm 0.009$ & $0.915 \pm 0.011$ & $8.788 \pm 0.194$  & $0.524$ & $20.09$ \\
$\gamma$-FM ($\gamma=1$) & 1024 & $54.223 \pm 0.486$ & $0.412 \pm 0.019$ & $0.933 \pm 0.005$ & $1.253 \pm 0.155$  & $0.526$ & $20.09$ \\
\bottomrule
\end{tabular}
\end{table}

\end{document}